\newcommand{\agl}{\textbf{CTRL\,}}
\newcommand{\KL}{\mathrm{KL}}
\newcommand{\pre}{\mathrm{pre}}
\newcommand{\alg}{\textbf{CTRL}\,}
\definecolor{Gray}{gray}{0.9}
\title{Adding Conditional Control to Diffusion Models with Reinforcement Learning}
\author{%
  Yulai Zhao\thanks{Equal contribution.}~~\thanks{This work is done during an internship at Genentech.} \\
  Princeton University\\
  \texttt{yulaiz@princeton.edu} \\
  \And
  Masatoshi Uehara$^{*}$ \\
  Genentech \\
  \texttt{uehara.masatoshi@gene.com} \\
  \And
  Gabriele Scalia \\
  Genentech \\
\texttt{scaliag@gene.com} \\
  \And
  Sunyuan Kung \\
  Princeton University \\
\texttt{kung@princeton.edu} \\
 \And
  Tommaso Biancalani \\
  Genentech \\
\texttt{biancalt@gene.com} \\
\And
  Sergey Levine\thanks{Corresponding authors.} \\
  University of California, Berkeley \\
\texttt{sergey.levine@berkeley.edu} \\
\And
Ehsan Hajiramezanali$^{\ddagger}$\\
Genentech \\
\texttt{hajiramm@gene.com} \\
}
\begin{document}
\maketitle

\begin{abstract}
Diffusion models are powerful generative models that allow for precise control over the characteristics of the generated samples. While these diffusion models trained on large datasets have achieved success, there is often a need to introduce additional controls in downstream fine-tuning processes, treating these powerful models as pre-trained diffusion models. This work presents a novel method based on reinforcement learning (RL) to add such controls using an offline dataset comprising inputs and labels. We formulate this task as an RL problem, with the classifier learned from the offline dataset and the KL divergence against pre-trained models serving as the reward functions. Our method, \textbf{CTRL} (\textbf{C}onditioning pre-\textbf{T}rained diffusion models with \textbf{R}einforcement \textbf{L}earning), produces soft-optimal policies that maximize the abovementioned reward functions. We formally demonstrate that our method enables sampling from the conditional distribution with additional controls during inference.
Our RL-based approach offers several advantages over existing methods. Compared to classifier-free guidance,
it improves sample efficiency and can greatly simplify dataset construction by leveraging conditional independence between the inputs and additional controls. Additionally, unlike classifier guidance, it eliminates the need to train classifiers from intermediate states to additional controls.
The code is
available at \url{https://github.com/zhaoyl18/CTRL}.

\end{abstract}

\section{Introduction}
\label{sec:intro}
\vspace{-2mm}

Diffusion models have emerged as effective generative models for capturing intricate distributions~\citep{sohl2015deep,ho2020denoising}. Their capabilities are further enhanced by building conditional diffusion models $p(x|c)$. For instance, in text-to-image generative models like DALL-E~\citep{ramesh2021zero} and Stable Diffusion~\citep{rombach2022high}, $c \in \Ccal$ is a prompt, and $x \in \Xcal$ is the image generated according to this prompt. 
While diffusion models trained on extensive datasets have shown remarkable success, additional controls often need to be incorporated during downstream fine-tuning when treating these powerful models as pre-trained diffusion models.

In this work, our goal is to incorporate new conditional controls into pre-trained diffusion models. Specifically, given access to a large pre-trained model capable of modeling $p(x|c)$ trained on extensive datasets, we aim to condition it on an additional random variable $y \in \Ycal$, thereby creating a generative model $p(x|c,y)$. To accomplish this, we utilize the pre-trained model and an offline dataset consisting of triplets $\{c,x,y\}$. This scenario is important, as highlighted in the existing literature on computer vision (e.g., \citet{zhang2023adding}), because it enables the extension of generative capabilities with new conditional variables without requiring retraining from scratch. 
Currently, classifier-free guidance \citep{ho2020denoising} is a prevailing approach for incorporating conditional controls into diffusion models, and it has proven successful in computer vision ~\citep{zhang2023sine, zhao2024uni}. However, its effectiveness may not extend well to other challenging problems, especially when large offline datasets are unavailable. Indeed, the success of training conditional diffusion models via classifier-free guidance heavily relies on such datasets~\citep{brooks2023instructpix2pix}, which are often impractical to obtain. In these scenarios, this method tends to struggle.

In our work, we present a new approach for adding new conditional controls 
via reinforcement learning (RL) to further improve sample efficiency. Inspired by recent progress in RL-based fine-tuning~\citep{black2023training,fan2023dpok,uehara2024understanding}, we frame the conditional generation as an RL problem within a Markov Decision Process (MDP). In this formulation, the reward, which we want to maximize, is the (conditional) log-likelihood function $\log p(y|x,c)$, and the policy, conditioned on $(c,y)$, corresponds to the denoising process at each time step in a diffusion model. We formally demonstrate that, by executing the soft-optimal policy, which maximizes the reward $\log p(y|x,c)$ with KL penalty against the pre-trained model, we can sample from the target conditional distribution $p(x|c,y)$ during inference. Hence, our proposed algorithm, $\textbf{CTRL}$ (\textbf{C}onditioning pre-\textbf{T}rained diffusion models
with \textbf{R}einforcement \textbf{L}earning) consists of three main steps: (1) learning a classifier $\log p(y|x,c)$ (which will serve as our reward function in the MDP) from the offline dataset, (2) constructing an augmented diffusion model by adding (trainable) parameters to the pre-trained model in order to accommodate an additional label $y$, and (3) learning soft-optimal policy within the aforementioned MDP during fine-tuning. Our approach is novel as it significantly diverges from classifier-free guidance and distinguishes itself from existing RL-based fine-tuning methods by integrating an augmented model in the fine-tuning process to support additional controls.

\begin{table}[!t]
    \centering
    \caption{Comparison with existing approaches. Unlike classifier guidance and its variations, our method involves directly fine-tuning pre-trained models. We avoid the need to learn a mapping from $x_t \to y$ or rely on heuristic approximations.  Additionally,  while classifier-free guidance always demands triplets $\{c,x,y\}$, our approach can leverage conditional independence ($y \perp c | x$) and only necessitate pairs $\{x,y\}$, simplifying the construction of the offline dataset.
    }
    \label{tab:contribution}
    \begin{tabular}{m{0.49\textwidth }|m{0.12\textwidth}m{0.14\textwidth}m{0.14\textwidth}}  \toprule 
    Methods     &  Fine-tuning  &  Need to learn $x_t\to y$ &  Leveraging conditional independence     \\  \midrule 
   Classifier guidance  \citep{dhariwal2021diffusion}    &   No   & Yes  & Yes \\ 
   Reconstruction guidance (e.g. ~\citet{ho2022video,chung2022diffusion,han2022card}) 
   & No  & No  & Yes  \\  
   Classifier-free guidance \citep{ho2022classifier}  &  Yes  &  No  & No \\ \midrule 
\rowcolor{Gray}   \alg (Ours) &  Yes  & No  & Yes  \\ \bottomrule
    \end{tabular}
\end{table}

Our novel RL-based approach offers several advantages over existing methods for adding additional controls. Firstly, in contrast to classifier-free guidance, which uses offline data to directly model $p(x|y,c)$, our method leverages offline data to model the simpler distribution $p(y|x,c)$, improving sample efficiency (in typical scenarios where $y$ is lower-dimensional than $x$).
Secondly, in typical scenarios where the additional label $y$ depends solely on $x$ (e.g., the compressibility of an image depends only on the image, not the prompt), our fine-tuning method only requires pairs $\{x,y\}$, whereas classifier-free guidance still necessitates triplets $\{c,x,y\}$ from the offline dataset. 
This is because the reward function simplifies to $\log p(y|x)$ due to the conditional independence $y \perp c| x$, which gives $\log p(y|x,c)=\log p(y|x)$. 
Furthermore, when the goal is to simultaneously add conditioning controls on two labels, $y_1$ and $y_2$, and both labels only depend on $x$, our method requires only pairs $\{x,y_1\}$ and $\{x,y_2\}$. In contrast, classifier-free guidance requires quadruples $\{c,x,y_1,y_2\}$. Therefore, in this manner, \agl can also leverage the \emph{compositional nature} of the mapping between inputs and additional labels.  

Our contributions can be summarized as follows. We propose an RL-based fine-tuning approach for conditioning pre-trained diffusion models on additional labels. In comparison to classifier-free guidance, our method uses the offline dataset in a sample-efficient manner and enables leveraging the conditional independence assumption, which significantly simplifies the construction of the offline dataset. Additionally, we establish a close connection to classifier guidance \citep{dhariwal2021diffusion,song2020score}, showing that it provides an alternative method for obtaining the aforementioned soft-optimal policies (in ideal cases where there are no statistical/model-misspecification errors in the algorithms). 
Despite this connection, our algorithm addresses common challenges in classifier guidance, such as the need to learn classifiers at multiple noise scales in standard classifier guidance and the use of fundamental approximations in some variants to avoid learning these noisy classifiers~\citep{chung2022diffusion,song2022pseudoinverse}.
Experimentally, we validate the superiority of \agl over baselines in both single-task and multi-task conditional image generation, such as generating highly aesthetic yet compressible images, where existing methods often struggle.
Table~\ref{tab:contribution} summarizes the main features of the proposed algorithm compared to existing methods.

\section{Related Works}
\label{sec:related_works}
\vspace{-2mm}
\paragraph{Classfier guidance.}
\citet{dhariwal2021diffusion,song2020score} 
introduced classifier guidance, a method that entails training a classifier and incorporating its gradients to guide inference (while freezing pre-trained models). However, a notable drawback of this technique lies in the classifier's accuracy in predicting $y$ from intermediate $x_t$, resulting in cumulative errors during the diffusion process. 
To mitigate this issue, several studies propose methods to circumvent it through reconstruction, referred to as \textbf{reconstruction guidance} in this work. Specifically, they employ certain approximations that map intermediate states $x_t$ back to the original input space $x_0$, allowing the classifier to be learned solely from $x_0$ to $y$ \citep{ho2022video, han2022card, chung2022diffusion, finzi2023user, bansal2023universal}. 
In contrast to these works, our approach focuses on fine-tuning the diffusion model itself rather than relying on an inference-time technique. While the strict comparison between model fine-tuning and inference-time techniques is not feasible, we theoretically elucidate the distinctions and connections of our approach with classifier guidance in~\pref{sec:classfier-guidance}.

\vspace{-2mm}
\paragraph{Classfier-free guidance.}

Classifier-free guidance \citep{ho2022classifier} is a method that directly conditions the generative process on both data and context, bypassing the need for explicit classifiers. This methodology has been widely and effectively applied, for example, in text-to-image models \citep{nichol2021glide,saharia2022photorealistic,rombach2022high}. While the original research does not explore classifier-free guidance within the scope of fine-tuning pre-trained diffusion models, several subsequent studies address fine-tuning scenarios  \citet{zhang2023adding,xie2023difffit}. As elucidated in \pref{sec:classfier-free}, compared to classifier-free guidance, our approach can improve sample efficiency and leverage conditional independence to facilitate offline dataset construction.

\vspace{-2mm}
\paragraph{Fine-tuning via RL.}
Several previous studies have addressed the fine-tuning of diffusion models by optimizing relevant reward functions. Methodologically, these approaches encompass supervised learning \citep{lee2023aligning, wu2023better}, reinforcement learning \citep{black2023training, fan2023dpok, uehara2024feedback}, and control-based techniques \citep{clark2023directly, xu2023imagereward, prabhudesai2023aligning, uehara2024fine}. While our proposal draws inspiration from these works, our objective for fine-tuning is to tackle a distinct goal: incorporating \emph{additional} controls. To achieve this, unlike previous approaches, we employ policies with augmented parameters rather than merely fine-tuning pre-trained models without adding any new parameters.

\begin{remark} A concurrent study \citep{denker2025deft} presents a similar key theorem demonstrating that conditioning can be framed as reinforcement learning (RL) problems. However, while their experiments rely on denoising score-matching objectives, our work empirically validates the effectiveness of RL objectives.
\end{remark}

\section{Preliminaries}
\label{sec:pre}
\vspace{-2mm}

In this section, we introduce the problem setting, review the existing methods addressing this problem, and discuss their disadvantages.

\vspace{-2mm}
\subsection{Goal: Conditioning with Additional Labels Using Offline Data}
\vspace{-2mm}
We first define our main setting and main objective. 
Throughout this paper, we use $\Ycal$ and $\Ccal$ to represent condition spaces and $\Xcal$ to denote the (Euclidean) sample space. Given the pre-trained model, which enables us to sample from $p^{\pre}(x|c):\Ccal \to \Delta(\Xcal)$, our goal is to add new conditional controls \( y \in \Ycal \) such that we can sample from $p(x|c,y)$.

\vspace{-2mm}
\paragraph{Pre-trained model and offline dataset.}
A (continuous-time) pre-trained conditional diffusion model is characterized by the following SDE\footnote{For simplicity, we consider the case where the initial distribution is a Dirac delta, as in bridge matching. The extension of our proposal for stochastic distributions remains straightforward~\citep{uehara2024fine}.}, where $f^{\mathrm{pre}}:[0,T]\times \Ccal \times \Xcal \to \RR^d$ is a model with parameter $\theta$:
\begin{align}\label{eq:pre_trained}
    d x_t = f^{\mathrm{pre}}(t,c, x_t ; \theta^{\mathrm{pre}})dt + \sigma(t)dw_t, \quad x_0=x_{\mathrm{ini}},
\end{align}

In training diffusion models, the parameter \( \theta^{\mathrm{pre}} \) is derived by optimizing a specific loss function on large datasets\footnote{For notational simplicity, throughout this work, we would often drop $\theta^{\mathrm{pre}}$.}.

We refer interested readers to~\pref{app:diffusion} for more details on constructing these loss functions. Using the pre-trained model and following the above SDE~\eqref{eq:pre_trained} from $0$ to $T$, we can sample from $p^{\mathrm{pre}}(\cdot|c)$ for any condition $c \in \Ccal$.

To add additional control to a pre-trained model, as in many recent works \citep{dhariwal2021diffusion,bansal2023universal,epstein2023diffusion}, we assume access to offline data: $\Dcal= \{c^{(i)},x^{(i)},y^{(i)}\}_{i=1}^n \in \Ccal \times \Xcal \times \Ycal$. We denote the conditional distribution of $y$ given $x$ and $c$ by $p^{\diamond}(y|x,c)$. 

\vspace{-2mm}
 \paragraph{Target distribution.} Using the pre-trained model and the offline dataset, our goal is to obtain a diffusion model such that we can sample from a distribution over $\Ccal \times \Ycal \to \Delta(\Xcal)$ as below:  
 \begin{align}
 \label{eq:target_distribution}
     p_{\gamma}(\cdot|c,y):= \frac{\{ p^{\diamond}(y|\cdot,c) \}^{\gamma} p^{\mathrm{pre}}(\cdot|c) }{\int \{ p^{\diamond}(y|x,c) \}^{\gamma} p^{\mathrm{pre}}(x|c)\mu(dx)},
 \end{align}
where $\gamma \in \RR^{+}$ denotes the strength of additional guidance and $\mu$ is the Lebsgue measure.  

Such target distribution is extensively explored in the literature on classifier guidance and classifier-free guidance~\citep{dhariwal2021diffusion, ho2022classifier, nichol2021glide, saharia2022photorealistic, rombach2022high}. Specifically, when \(\gamma = 1\), this distribution corresponds to the standard conditional distribution \(p(x|c,y)\), which is a fundamental objective of many conditional generative models~\citep{dhariwal2021diffusion, ho2022classifier}.

Moreover, for a general $\gamma$, $p_\gamma$ can be formulated via the following optimization problem: 
\begin{align*}
   p_{\gamma}(\cdot|c,y )=  \argmin_{q: \Ccal \times \Ycal \to \Delta(\Xcal) } \EE_{x\sim q(\cdot|c,y) }[- \gamma \log p^{\diamond}(y|x,c)] + \KL (q(\cdot|c,y)  \|p^{\mathrm{pre}}(\cdot|c)). 
\end{align*}
This relation is clear as the objective function equals $\KL(q(\cdot|c,y) \| p_{\gamma}(\cdot|c,y))$ up to a constant.

\vspace{-2mm}
\paragraph{Goal.}
As discussed, the primary goal of this research is to train a generative model capable of simulating \(p_{\gamma}(\cdot|c,y)\). To achieve this, we introduce the following SDE: 
\begin{align}\label{eq:new_model}
dx_t = g(t,c,y,x_t) \, dt + \sigma(t) \, dw_t, \quad x_0 = x_{\text{ini}},
\end{align}
where \(g: [0, T] \times \mathcal{C} \times \mathcal{Y} \times \mathcal{X} \to \mathbb{R}^d\) is an augmented model to add additional controls into pre-trained models. The primary challenge involves leveraging both offline data and pre-trained model weights to train the term \(g\), ensuring that the marginal distribution of \(x_T\) induced by the SDE~\eqref{eq:new_model} accurately approximates \(p_{\gamma}\).

\vspace{-2mm}

\paragraph{Notation.} Let the space of trajectories $x_{0:T}$ be $\Kcal$. Conditional on $c$ and $y$, we denote the measure induced by the SDE \eqref{eq:new_model} over $\Kcal$ by $\PP^{g}(\cdot|c,y)$. Similarly, we use $\PP^{g}_t(\cdot|c,y)$ and $p^{g}_t(\cdot| c,y)$ to represent the marginal distribution of $x_t$ and density $d\PP^{g}_t(\tau|c,y)/d\mu$.

\vspace{-2mm}
\subsection{Existing Methods }
\label{sec:exisiting_methods}
\vspace{-2mm}
In this subsection, we describe two existing methods that are applicable in our context to achieve the aforementioned goal.

\subsubsection{Classfier-Free Guidance}\label{sec:classfier_free}

Recent works have studied fine-tuning pre-trained models with classifier-free guidance~\citep{brooks2023instructpix2pix,zhang2023adding,xie2023difffit}.
These methods introduce an augmented model $g$ as described in \eqref{eq:new_model}, where the weights are initialized from the pre-trained model. 
Fine-tuning is via minimizing classifier-free guidance loss on the offline dataset. While successful in many applications, these methods may struggle in scenarios where offline datasets for new conditions are limited~\citep{huang2021therapeutics,yellapragada2024pathldm,giannone2024aligning}.

\vspace{-2mm}
\subsubsection{Classfier Guidance}
\vspace{-2mm}
\label{sec:classfier}

Classifier guidance \citep{dhariwal2021diffusion,song2020score} is based on the following result.  
\begin{lemma}[Doob's h-transforms~\citep{rogers2000diffusions}]\label{lem:doob}
For any $c \in \Ccal$ and $y \in \Ycal$, by evolving according to the following SDE from $0$ to $T$:
\begin{align}\label{eq:drift_term}
       d x_t = \{ f^{\mathrm{pre}}(t,c, x_t) +  \sigma^2(t) 
 \underbrace{\nabla_{x_t} \log \EE_{\substack{x_{t:T}\sim \PP^{\pre}(\cdot | x_t,c)\\ y' \sim p^{\diamond}(\cdot|x_T,c)} }[\mathrm{I}(y=y')|x_T,c]}_{\mathrm{Additional\,Drift}:=\nabla_{x_t} \log p(y|x_t,c) } \}dt + \sigma(t)dw_t, 
\end{align}  
the marginal distribution of $x_T$, i.e., $p(x_T|c,y)$, is equal to the target distribution $p_{\gamma=1}(\cdot|c,y)$~\eqref{eq:target_distribution}. Here, $\PP^{\pre}$ denotes the distribution induced by the pre-trained diffusion model \eqref{eq:pre_trained}. 
\end{lemma}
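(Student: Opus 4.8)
The plan is to establish the result via Doob's $h$-transform theory, which is a classical tool for conditioning diffusion processes on terminal events. The statement asserts that adding the drift correction $\sigma^2(t)\nabla_{x_t}\log p(y|x_t,c)$ to the pre-trained SDE yields a process whose terminal marginal is exactly $p_{\gamma=1}(\cdot|c,y)=p(x|c,y)$. The core idea is that conditioning the pre-trained diffusion on the event $\{y'=y\}$ (where $y'\sim p^{\diamond}(\cdot|x_T,c)$) is precisely an $h$-transform with $h(t,x_t)=\EE[\mathrm{I}(y=y')\mid x_t,c]=p(y|x_t,c)$, and the general theory tells us the additional drift induced by an $h$-transform is $\sigma^2(t)\nabla_{x_t}\log h(t,x_t)$.

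\paragraph{Key steps.}
First I would define the conditioning function $h(t,x) := p(y|x_t=x,c)$, i.e.\ the probability that the terminal label matches $y$ given that the pre-trained process passes through $x$ at time $t$. I would verify that $h$ is a space-time harmonic function for the pre-trained generator, meaning $h(t,x_t)$ is a martingale under $\PP^{\pre}(\cdot|c)$; this follows from the tower property of conditional expectation since $h(t,x_t)=\EE^{\pre}[\,\mathrm{I}(y=y')\mid x_t,c\,]$ is, by construction, a conditional expectation of a terminal random variable. Second, I would invoke Doob's $h$-transform (\pref{lem:doob}, which cites \citep{rogers2000diffusions}): the law obtained by reweighting path measure $\PP^{\pre}(\cdot|c)$ by the density $h(T,x_T)/h(0,x_{\ini})$ is itself the law of a diffusion, and its drift is the original drift $f^{\pre}$ plus the correction $\sigma^2(t)\nabla_x\log h(t,x)$. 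This matches exactly the additional drift term in \eqref{eq:drift_term}. Third, I would identify the terminal marginal of this $h$-transformed process. The reweighting multiplies the terminal density $p^{\pre}(x_T|c)$ by $h(T,x_T)=p(y|x_T,c)$ and normalizes, giving
\begin{align*}
p^{g}_T(x|c,y) \propto p(y|x,c)\,p^{\pre}(x|c),
\end{align*}
which is precisely $p_{\gamma=1}(\cdot|c,y)$ from \eqref{eq:target_distribution} after identifying $p^{\diamond}(y|x,c)=p(y|x,c)$ and setting $\gamma=1$.

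\paragraph{Main obstacle.}
I expect the main technical obstacle to lie in rigorously justifying the change of drift under the $h$-transform, specifically applying Girsanov's theorem (or the Cameron--Martin--Girsanov framework) to show that reweighting the path measure by $h(T,x_T)$ corresponds to adding exactly $\sigma^2(t)\nabla_x\log h$ to the drift. This requires verifying the martingale property of $h(t,x_t)$ and suitable integrability/regularity conditions (e.g.\ that $h$ is strictly positive and sufficiently smooth so that $\nabla_x\log h$ is well-defined and the Novikov-type condition holds). Since \pref{lem:doob} cites the classical result of \citep{rogers2000diffusions}, I would lean on that reference for the measure-theoretic heavy lifting and focus the argument on correctly identifying $h$ and confirming that its induced drift and terminal marginal coincide with the claimed expressions, rather than re-deriving the $h$-transform machinery from scratch.
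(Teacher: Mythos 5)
Your proposal is correct, but it takes a genuinely different route from the paper. The paper does not prove \pref{lem:doob} directly in the appendix; it cites \citet{rogers2000diffusions} and instead recovers the statement as the $\gamma=1$ special case of its stochastic-control machinery: \pref{lem:KL_obj} rewrites the objective as a KL-regularized reward via Girsanov, \pref{lem:Feynman} derives $\exp(v^{\star}_t(x|c,y))=\EE_{\PP^{\pre}}[(p^{\diamond}(y|x_T,c))^{\gamma}\mid x_t=x,c]$ from the HJB equation through the exponential (Cole--Hopf) transformation and the Feynman--Kac formula, the proof of \pref{thm:key} shows the optimal path measure is $\PP^{\pre}(\tau|c)(p^{\diamond}(y|x_T,c))^{\gamma}/C(c,y)$ by exhibiting a vanishing KL divergence, and \pref{lem:again_doob} identifies the optimal drift as $f^{\pre}+\sigma^2(t)\nabla_x\log\EE[(p^{\diamond})^{\gamma}\mid x_t,c]$. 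You instead argue probabilistically: identify $h(t,x)=p(y|x_t=x,c)$ as a space-time harmonic function (a martingale by the tower property), invoke the classical $h$-transform to obtain the drift correction $\sigma^2(t)\nabla_x\log h$, and read off the terminal marginal as $\propto p(y|x_T,c)\,p^{\pre}(x_T|c)$. Both arguments are sound and arrive at the same path-measure reweighting; your martingale argument is the more direct and elementary one for the stated $\gamma=1$ case and matches the cited reference, while the paper's control-theoretic derivation buys the generalization to arbitrary $\gamma$ (where the ``conditioning on an event'' interpretation no longer applies, but exponential tilting by $(p^{\diamond})^{\gamma}$ still does) and unifies the lemma with the RL formulation in \eqref{eq:control}. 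The technical caveats you flag (positivity and smoothness of $h$, Novikov-type integrability for the Girsanov change of measure) are exactly the conditions the paper also leaves implicit, so your treatment is no less rigorous than the paper's.
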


This lemma suggests that to simulate the target distribution~\eqref{eq:target_distribution}, we only need to construct SDE~\eqref{eq:drift_term}.
However, practical issues arise: 
first, 
training classifier $p(y|x_t,c)$ requires extensive data at each timestep, which is cumbersome with large pre-trained models.
Furthermore, 
accumulated inaccuracies in drift estimates may lead to poor performance~\citep{li2023error}.

\vspace{-2mm}
\paragraph{Reconstruction guidance.} To mitigate these issues, several studies propose to approximate $p(y|x_t,c)$ directly via reconstruction~\citep{ho2022video, han2022card, chung2022diffusion,guo2024gradient}\footnote{We categorize them as reconstruction guidance methods for simplicity. We note that there are many variants.}, specifically by 
$
p(y|x_t,c)=\int p^{\diamond}(y|x_T,c)p(x_T|x_t,c)dx_T\approx  p(y|\hat x_T(x_t,c),c)
$, 
where $\hat x_T(x_t,c)$ is the (expected) denoised sample given $x_t,c$, i.e., $\hat x_T(x_t,c)=\EE[x_T|x_t,c]$. Given such an approximation, we only need to learn $p^{\diamond}(y|x_T,c)$ from data. 
However, this approximation may become imprecise when $\PP(x_T|x_t,c)$ is noisy or is difficult to estimate reliably~\citep{chung2022diffusion}.

\section{Conditioning Pre-Trained Diffusion Models with RL}
\label{sec:main_alg}
\vspace{-2mm}

This section provides details on how our method solves the aforementioned goal with methodological motivations.
We begin with a key observation: the conditioning problem can be effectively conceptualized as an RL problem.\footnote{For more details of the RL formulation, such as state space, action space, and transition function, please refer to~\citep{uehara2024understanding}.} Building upon this insight, we illustrate our main algorithm.

\vspace{-2mm}
\subsection{Conditioning as RL} 
\vspace{-2mm}

Recall that our objective is to learn a drift term $g$ in \eqref{eq:new_model} so that the induced marginal distribution at $T$ (i.e., $p^{g}_T$) closely matches our target distribution $p_{\gamma}$. To achieve this, we first formulate the problem via the following minimization:
\begin{align*}
\argmin_{g}\KL(p^{g}_T(\cdot|c,y) \| p_{\gamma}(\cdot|c,y) ).
\end{align*}
With some algebra, we can show that the above optimization problem is equivalent to the following: 
\begin{align*}
   \argmin_{g}  \EE_{x_{0:T}\sim \PP^{g}(\cdot|c,y)}\left [ -\gamma \log p^{\diamond}(y|x,c)  + \frac{1}{2}\int_{0}^T  \frac{ \|f^{\mathrm{pre}}(s,c,x_s) - g(s,c,y,x_s)   \|^2 }{\sigma^2(s) }ds \right ]. 
\end{align*}
Here, recall that $\PP^g$ is the measure induced by SDE \eqref{eq:new_model} with a drift coefficient $g$. Based on this observation, we derive the following theorem. 

\begin{theorem}[Conditioning as RL]\label{thm:key}
Consider the following RL problem: 
    \begin{align}\label{eq:control}
      g^{\star} :=  \argmax_{g} \EE_{\substack{(c,y) \sim \Pi(c,y)\\ x_{0:T}\sim \PP^{g}(\cdot|c,y)}}\left [ \gamma \log p^{\diamond}(y|x_T,c) -  \frac{1}{2}\int_{0}^T  \frac{ \|f^{\mathrm{pre}}(s,c,x_s) - g(s,c,y,x_s)   \|^2 }{\sigma^2(s) }ds \right ],
    \end{align}
where $\Pi \in \Delta(\Ccal \times \Ycal)$. 
Significantly,
the marginal distribution $p^{g^{\star}}_T$ matches our target distribution:
\begin{align*}
        \forall (c,y) \in \mathrm{Supp}(\Pi);\,  p^{g^{\star}}_T(\cdot|c,y) = p^{\gamma}(\cdot|c,y) . 
\end{align*}
\end{theorem}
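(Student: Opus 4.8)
The plan is to reduce the claim to a per-$(c,y)$ KL-regularized stochastic control problem and then apply the standard ``completion-of-the-KL'' identity for entropy-regularized objectives. Since $\Pi$ is a fixed distribution that does not depend on $g$, and the drift $g(\cdot,c,y,\cdot)$ may be chosen independently for each $(c,y)$, maximizing the expectation over $\Pi$ is equivalent to maximizing the inner integrand separately for every $(c,y)\in\mathrm{Supp}(\Pi)$; fix such a pair. The first step is to recognize the running-cost term as a path-space KL divergence: both $\PP^{g}(\cdot|c,y)$ and $\PP^{\pre}(\cdot|c)$ are laws of diffusions sharing the same coefficient $\sigma(t)$ and initial condition $x_{\ini}$, so by Girsanov's theorem
$$\KL(\PP^{g}(\cdot|c,y)\,\|\,\PP^{\pre}(\cdot|c)) = \EE_{x_{0:T}\sim\PP^{g}(\cdot|c,y)}\Big[\tfrac12\int_0^T \tfrac{\|f^{\pre}(s,c,x_s)-g(s,c,y,x_s)\|^2}{\sigma^2(s)}\,ds\Big].$$
This rewrites the inner objective of \eqref{eq:control} as $\EE_{\PP^{g}}[\gamma\log p^{\diamond}(y|x_T,c)] - \KL(\PP^{g}(\cdot|c,y)\,\|\,\PP^{\pre}(\cdot|c))$.

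Next I would introduce the exponentially tilted path measure $\PP^{\star}$ defined by $d\PP^{\star}/d\PP^{\pre} = \{p^{\diamond}(y|x_T,c)\}^{\gamma}/Z$, with normalizer $Z=\EE_{\PP^{\pre}(\cdot|c)}[\{p^{\diamond}(y|x_T,c)\}^{\gamma}]$. Using $\log(d\PP^{g}/d\PP^{\pre}) = \log(d\PP^{g}/d\PP^{\star}) + \gamma\log p^{\diamond}(y|x_T,c) - \log Z$ inside $\KL(\PP^{g}\,\|\,\PP^{\pre})=\EE_{\PP^g}[\log(d\PP^{g}/d\PP^{\pre})]$ and collecting terms yields the identity
$$\EE_{\PP^{g}}[\gamma\log p^{\diamond}(y|x_T,c)] - \KL(\PP^{g}(\cdot|c,y)\,\|\,\PP^{\pre}(\cdot|c)) = -\KL(\PP^{g}(\cdot|c,y)\,\|\,\PP^{\star}) + \log Z.$$
Since $\log Z$ is independent of $g$ and KL divergence is nonnegative and vanishes only when its arguments coincide, the objective is maximized exactly when $\PP^{g}(\cdot|c,y)=\PP^{\star}$, attaining value $\log Z$.

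Finally I would argue realizability and read off the terminal marginal. The tilt $\{p^{\diamond}(y|x_T,c)\}^{\gamma}/Z$ depends on the trajectory only through its endpoint $x_T$, so $\PP^{\star}$ is the Doob $h$-transform of $\PP^{\pre}$ with $h_t(x)=\EE_{\PP^{\pre}}[\{p^{\diamond}(y|x_T,c)\}^{\gamma}\mid x_t=x]$; by the same reasoning as \pref{lem:doob} (with the indicator replaced by $\{p^{\diamond}\}^{\gamma}$), $\PP^{\star}$ is the law of an SDE of the form \eqref{eq:new_model} with drift $f^{\pre}+\sigma^2\nabla_x\log h_t$, so an admissible $g^{\star}$ with $\PP^{g^{\star}}(\cdot|c,y)=\PP^{\star}$ exists. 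Its time-$T$ marginal is $p^{\star}_T(x|c,y)=\{p^{\diamond}(y|x,c)\}^{\gamma}p^{\pre}(x|c)/Z$, which is exactly the target $p_{\gamma}(\cdot|c,y)$ of \eqref{eq:target_distribution}. I expect the main obstacle to be this realizability step — showing that the optimal \emph{path} measure $\PP^{\star}$ is generated by an admissible Markov drift, rather than merely matching terminal marginals — together with the regularity needed to justify the Girsanov change of measure (a Novikov-type condition) and the finiteness of $Z$ and of $\EE_{\PP^g}[\log p^{\diamond}]$.
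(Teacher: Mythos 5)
Your proposal is correct, but it reaches the optimality statement by a genuinely different route than the paper. The paper's proof is a stochastic-control verification argument: it introduces the optimal value function $v^{\star}_t$, derives the optimal drift from the HJB equation, establishes the closed form $\exp(v^{\star}_t(x|c,y))=\EE_{\PP^{\pre}(\cdot|x_t,c)}[(p^{\diamond}(y|x_T,c))^{\gamma}\mid x_t=x,c]$ via Feynman--Kac (\pref{lem:Feynman}), and then checks that $\KL\bigl(\PP^{g^{\star}}(\cdot|c,y)\,\big\|\,\PP^{\pre}(\cdot|c)(p^{\diamond}(y|x_T,c))^{\gamma}/C(c,y)\bigr)=-v^{\star}_0+\log C(c,y)=0$. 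You instead apply the Gibbs variational principle directly on path space: after the same Girsanov step (identical to the paper's KL-constrained-reward lemma), the tilting identity gives $\mathrm{obj}=-\KL(\PP^{g}\|\PP^{\star})+\log Z$, so the optimal path measure is identified by KL nonnegativity alone, with optimal value $\log Z$, and no PDE argument is needed for optimality. What your route buys is a more elementary and arguably cleaner optimality argument; what it costs is that realizability --- showing the optimal \emph{path} measure is induced by an admissible Markov drift --- must be argued separately, which you correctly flag as the main obstacle and resolve via the Doob $h$-transform (this is exactly the content the paper packages into \pref{lem:Feynman} and \pref{lem:again_doob}, where the HJB first-order condition hands you the drift $f^{\pre}+\sigma^2\nabla_x\log h_t$ for free). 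The final marginalization at $T$ is the same in both proofs, and both share the same unaddressed regularity caveats (Novikov's condition, finiteness of $Z$). In short: same Girsanov foundation and same limiting object, but variational-principle optimality plus $h$-transform realizability in your version versus HJB-plus-Feynman--Kac verification in the paper's.
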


Proofs are deferred to~\pref{app:proof}. 
This theorem demonstrates that, after obtaining the optimal drift $g^{\star}$ by solving the RL problem in~\eqref{eq:control}, we can sample from the target distribution $p^{\gamma}(\cdot|c,y)$ by following SDE~\eqref{eq:new_model} from time $0$ to $T$. 
In the next section, we explain how to solve \eqref{eq:control} in practice.

\vspace{-2mm}
\subsection{Algorithm}
\vspace{-2mm}

\label{sec:algorithm}
Theoretically inspired by \pref{thm:key}, we introduce~\pref{alg:guidance}. It consists of three steps.

\begin{algorithm}[!t]
\caption{\textbf{C}onditioning pre-\textbf{T}rained diffusion models
with \textbf{R}einforcement \textbf{L}earning (\textbf{CTRL})}\label{alg:guidance}
\begin{algorithmic}[1]
\STATE \textbf{Input}: Pre-trained model with a drift coefficient $f^{\mathrm{pre}}$, Offline data $\Dcal=\{c^{(i)}, x^{(i)}, y^{(i)}\}$, Exploratory distribution $\Pi \in \Delta(\Ccal \times \Ycal)$
\STATE Construct an augmented model $g(t,c,y,x;\psi)$. 
\STATE Train a classifier $\hat p(y|x,c)$ to approximate $p^{\diamond}(y|x,c)$ from the offline data $\Dcal$
 \STATE Fine-tune the diffusion model by solving the following RL problem (e.g. using \pref{alg:guidance2}):
 \begin{align*}
     \hat \psi = \argmax_{\psi} \EE_{\substack{(c,y) \sim \Pi(c,y) \\ x_{0:T}\sim \PP^{g}(\cdot|c,y;\psi)} } \left[ \gamma \mathbf{\log{}}  \hat p(y|x_T,c) - \frac{1}{2}\int_{0}^T  \frac{ \|f^{\mathrm{pre}}(s,c, x_s) - g(s,c,y,x_s;\psi )  \|^2 }{\sigma^2(s) }ds \right ]
 \end{align*}
 where $\PP^{g}(\cdot|c,y;\psi)$ is an distribution induced by the SDE with a parameter $\psi $. 
    \STATE \textbf{Output}: $d x_t =g(t,c,y,x_t; \hat \psi)dt  + \sigma(t)dw_t$
\end{algorithmic}
\end{algorithm}

\vspace{-2mm}
\paragraph{Step 1: Constructing the augmented model (Line 2).} 

To add additional conditioning to the pre-trained diffusion model,
it is necessary to enhance the pre-trained model $f^{\mathrm{pre}}(t,c,x;\theta)$. We introduce an augmented model $g(t,c,y,x;\psi)$ with parameters $\psi = [\theta^{\top}, \phi^{\top}]^{\top}$, initialized at $\psi^{\mathrm{ini}} = [{\theta^{\mathrm{pre}}}^{\top}, \mathbf{0}^{\top}]$. Here, $\psi$ is structured as a combination of the existing parameters $\theta$ and new parameters $\phi$.

Determining the specific architecture of the augmented model involves a tradeoff: adding more new parameters enhances expressiveness but raises computational costs. 
In scenarios where $\Ycal$ is discrete with cardinality $|\Ycal|$, the most straightforward solution is to instantiate $\phi$ with a simple linear embedding layer that maps each $y \in \Ycal$ to its corresponding embedding. These embeddings are then added to every intermediate output in the diffusion SDE (i.e., $x_t$ in~\eqref{eq:new_model}). 
This method preserves the original structure to the fullest extent while ensuring that all pre-trained weights are fully utilized. 
Experimentally, we observe that this lightweight modification leads to accurate conditional generations for complex conditioning tasks, as shown in~\pref{sec:exps}.

\vspace{-2mm}
\paragraph{Step 2: Training a calibrated classifier with offline data (Line 3).}  Using a function class $\Fcal\subset [\Ccal \times  \Xcal  \to \Delta(\Ycal) ]$, we perform maximum likelihood estimation (MLE): 
\begin{align}
\label{eq:learn_classifier}
    \hat p(\cdot|x,c):=\argmax_{ r \in \Fcal } \sum_{i=1}^n \log r(y^{(i)} | x^{(i)}, c^{(i)}). 
\end{align}
For instance, when $\Ycal$ is discrete, this loss reduces to the standard cross-entropy loss. When $\Ycal$ is continuous, assuming Gaussian noise, it reduces to a regression loss.

\vspace{-2mm}
\paragraph{Step 3: Planning (Line 4).}  Equipped with a
classifier, we proceed to solve the RL problem~\eqref{eq:control}, which constitutes the core of the proposed algorithm. As noted 
by~\citet{black2023training,fan2023dpok}, the diffusion model can be regarded as a special Markov Decision Process (MDP) with known transition dynamics. Thus, many types of off-the-shelf RL algorithms can be employed for planning. In this work, inspired by \citep{clark2023directly,prabhudesai2023aligning}, we employ direct back-propagation, which requires differentiable. If a classifier is not differentiable, we recommend using PPO-based methods~\citep{schulman2017proximal,black2023training}. Details are deferred to Appendix~\ref{sec:planning}.

Below, we make several remarks regarding implementing \alg in practice.
\begin{remark}[Using classifier-free guidance to adjust guidance strength]
Throughout the fine-tuning process demonstrated in~\pref{alg:guidance}, the guidance strength for the additional conditional control (i.e., $y$) is fixed at a specific $\gamma$ (see the target conditional distribution~\eqref{eq:target_distribution}). However, we note that during inference, this guidance strength \( \gamma \) can be adjusted—either increased or decreased—using the classifier-free guidance technique.
 Details are deferred to~\pref{app:inference_technique}.
\end{remark}

\begin{remark}[Choice of exploratory distribution $\Pi$]
According to~\pref{thm:key}, it is desired to improve the coverage over $\Ccal \times \Ycal$ during fine-tuning. For example, in practice, if $\Ycal$ only takes several discrete values, we can sample $y \in \Ycal$ uniformly from these values as done in~\pref{sec:exps}.
\end{remark}

\vspace{-2mm}
\subsection{Source of Errors in \alg}
We discuss the potential sources of error that \agl may encounter, which will be useful for comparison with existing methods in the next section. Additional limitations, such as computational cost, memory complexity, and the choice of guidance strength $\gamma$, are discussed in Appendix~\ref{sec:limitations}.

\vspace{-2mm}
\paragraph{Statistical error.} Statistical errors arise during the training of a classifier $\hat p(y|x,c)$ from offline data while learning $p^{\diamond}(y|x,c)$. A typical statistical error is given by:
\begin{align}\label{eq:statistical}
    \EE_{(x,c)\sim l^{\mathrm{off}}}[\| \hat p(\cdot|x,c) -p^{\diamond}(\cdot|x,c)\|^2_1   ]=O(\mathrm{Cap}(\Fcal)/n), 
\end{align}
where $l^{\mathrm{off}} \in \Delta(\Xcal \times \Ccal)$ represents the distribution of offline data, and $\mathrm{Cap}(\Fcal)$ denotes the size of the function class $\Fcal$ \citep{wainwright2019high}.

\vspace{-2mm}
\paragraph{Model-misspecification error.} Model-misspecification errors may occur during the learning of the classifier $p^{\diamond}(y|x,c)$ and in the augmented model if it fails to capture the optimal drift $g^{\star}$.

\vspace{-2mm}
\paragraph{Optimization error.} Optimization errors may occur during both the classifier training step and the planning step.

\section{Additional Comparisons with Existing Conditioning Methods}
\label{sec:comparisons}

In this section, we further clarify the connections and comparisons between our algorithm and the existing methods.

\vspace{-2mm}
\subsection{Comparison to Classifier Guidance}
\label{sec:classfier-guidance}

We explore the advantages of \agl over classifier guidance~\citep{dhariwal2021diffusion}, while also offering theoretical insights that link the two approaches. Despite their distinct goals -- classifier guidance is an inference-time technique, whereas our method fine-tunes an augmented diffusion model, there is a deep theoretical connection. This link is highlighted by our derivation of the analytical expression for the optimal drift in the RL problem~\eqref{eq:control} as below.

\begin{lemma}[Bridging RL-based conditioning with classifier guidance]\label{lem:again_doob}
The optimal drift term $g^{\star}$ for RL problem \eqref{eq:control} has the following explicit solution:
\begin{align*}
    g^{\star}(t,c,y,x_t)= f^\pre(t,c,x_t) + \sigma^2(t) \nabla_{x_t} \log{\EE_{\PP^{\pre}(\cdot|x_t,c)}\left [(p^\diamond(y|x_T,c))^\gamma| x_t,c \right]}, \quad \forall t \in [0,T]
\end{align*}

\end{lemma}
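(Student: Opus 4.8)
The plan is to recognize \eqref{eq:control} as a Kullback--Leibler--regularized stochastic control problem, solve it by first relaxing to path measures, and then apply Doob's $h$-transform to recover the drift. First I would invoke Girsanov's theorem: since the SDEs \eqref{eq:pre_trained} and \eqref{eq:new_model} share the same diffusion coefficient $\sigma(t)$, for any admissible drift $g$ the relative entropy between the induced path measures equals the quadratic running cost,
\begin{align*}
\KL\!\left(\PP^{g}(\cdot|c,y)\,\|\,\PP^{\pre}(\cdot|c)\right) = \EE_{\PP^{g}(\cdot|c,y)}\left[\frac{1}{2}\int_0^T \frac{\|f^{\pre}(s,c,x_s)-g(s,c,y,x_s)\|^2}{\sigma^2(s)}\,ds\right].
\end{align*}
Substituting this into \eqref{eq:control} rewrites the objective, for each fixed $(c,y)$, as $\EE_{\PP^{g}(\cdot|c,y)}[\gamma\log p^{\diamond}(y|x_T,c)] - \KL(\PP^{g}(\cdot|c,y)\,\|\,\PP^{\pre}(\cdot|c))$.

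Because the outer expectation over $(c,y)\sim\Pi$ is a nonnegative mixture and the drift $g$ is free to depend on $(c,y)$, the maximization decouples and I can optimize pointwise for each $(c,y)$. Relaxing the search from admissible drifts to arbitrary path measures $Q\ll\PP^{\pre}(\cdot|c)$, the inner problem $\max_{Q}\,\EE_{Q}[\gamma\log p^{\diamond}(y|x_T,c)]-\KL(Q\,\|\,\PP^{\pre}(\cdot|c))$ is the classical Gibbs variational problem, strictly concave in $Q$, whose unique maximizer is the exponential tilt
\begin{align*}
Q^{\star}(\cdot|c,y) = \frac{\{p^{\diamond}(y|x_T,c)\}^{\gamma}\,\PP^{\pre}(\cdot|c)}{\EE_{\PP^{\pre}(\cdot|c)}[\{p^{\diamond}(y|x_T,c)\}^{\gamma}]}.
\end{align*}
This reweighting acts only through the terminal value $x_T$.

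The final and most delicate step is to confirm that this optimal path measure is attainable by an admissible SDE and to read off its drift, which simultaneously certifies that the relaxation was tight. Since $Q^{\star}$ is a terminal reweighting of $\PP^{\pre}$ by the nonnegative functional $\{p^{\diamond}(y|x_T,c)\}^{\gamma}$, Doob's $h$-transform (the same mechanism underlying \pref{lem:doob}, now with the indicator replaced by $\{p^{\diamond}\}^{\gamma}$) guarantees that $Q^{\star}$ is induced by an SDE with the unchanged diffusion $\sigma(t)$ and drift $f^{\pre}+\sigma^2\nabla_{x}\log h$, where the harmonic function is the conditional expectation $h(t,x_t,c,y):=\EE_{\PP^{\pre}(\cdot|x_t,c)}[\{p^{\diamond}(y|x_T,c)\}^{\gamma}\mid x_t,c]$. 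Identifying this drift with $g^{\star}$ yields the stated formula. The main obstacle is exactly this last step: verifying that the unconstrained Gibbs optimum lies within the admissible class of $\sigma$-diffusions and extracting its drift through the $h$-transform; the remainder is the standard Girsanov and Donsker--Varadhan calculus.
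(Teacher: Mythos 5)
Your proposal is correct, but it reaches the formula by a genuinely different route than the paper. The paper treats \eqref{eq:control} as a stochastic control problem: it writes the Hamilton--Jacobi--Bellman equation for the optimal value function $v^{\star}_t$, reads off $g^{\star}=f^{\pre}+\sigma^2(t)\nabla_x v^{\star}_t$ from the pointwise maximization over the drift, observes that the exponential (Cole--Hopf) transform $\exp(v^{\star}_t)$ turns the resulting nonlinear PDE into the backward Kolmogorov equation for the pre-trained diffusion, and then invokes Feynman--Kac to conclude $\exp(v^{\star}_t(x|c,y))=\EE_{\PP^{\pre}(\cdot|c)}[(p^{\diamond}(y|x_T,c))^{\gamma}\mid x_t=x,c]$ (its \pref{lem:Feynman}); the lemma is then a one-line combination of these two facts. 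You instead work on path space: after the same Girsanov reduction of the running cost to $\KL(\PP^{g}(\cdot|c,y)\,\|\,\PP^{\pre}(\cdot|c))$ (which is exactly the paper's \pref{lem:KL_obj}), you solve the Gibbs variational problem over all path measures to get the exponential tilt by $(p^{\diamond}(y|x_T,c))^{\gamma}$, and recover the drift via Doob's $h$-transform. The two arguments meet at the same object --- your harmonic function $h$ is the paper's $\exp(v^{\star}_t)$, and the space-time harmonicity you invoke for the $h$-transform is precisely the linearized PDE the paper derives from the HJB equation --- so neither is more rigorous than the other; both leave the same regularity issues (positivity and smoothness of $h$, integrability for Girsanov, attainability of the $h$-transform drift within the admissible class) implicit. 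Your route has the advantage of identifying the optimal path measure globally in one step, which the paper only establishes separately in the proof of \pref{thm:key} (and does so by a KL computation that itself leans on \pref{lem:Feynman}); the paper's route has the advantage of producing the value function at every intermediate time, which it reuses to characterize the marginals $p^{g^{\star}}_t$. One point worth making explicit if you write this up: you should note that the supremum over path measures cannot exceed the supremum over admissible drifts because every admissible $g$ induces a measure absolutely continuous with respect to $\PP^{\pre}(\cdot|c)$, and that the $h$-transform certifies tightness of the relaxation in the other direction; you gesture at this but it is the load-bearing step.
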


The proof of \pref{lem:again_doob} is deferred to \pref{app:pf_again_doob}. This lemma indicates that when $\gamma=1$, the optimal drift $g^*$ corresponds to the drift term obtained from Doob's h-transform (i.e.,~\pref{lem:doob}), which is a precise used formula in classifier guidance.
Despite the link to classifier guidance through \pref{lem:again_doob}, our algorithm is fundamentally different. Classifier guidance requires learning a predictor from $x_t$ to $y$ for every $t \in T$, leading to accumulated inaccuracies. In contrast, our algorithm directly solves the RL problem~\eqref{eq:control}, avoiding the need for such predictors.

In~\pref{sec:classfier}, we explore \textbf{reconstruction guidance} methods that also aim to circumvent predicting $y$ from $x_t$. 
These methods propose first mapping $x_t$ to a denoised estimate $\hat{x}_T(x_t)$ and using this for further computations.
However, this approximation can be imprecise, especially over longer time horizons. As shown in~\citep[Theorem 1]{chung2022diffusion}, inherent approximation errors persist even without statistical, model-misspecification, or optimization errors. In contrast, our algorithm avoids such approximation errors.

\vspace{-2mm}
\subsection{Comparison to Classifier-Free Guidance}\label{sec:classfier-free}
\vspace{-2mm}

We first show how \agl leverages \emph{conditional independence} to ease implementation, a feature absent in classifier-free guidance. Finally, we discuss the improvements regarding sample (statistical) efficiency.

\vspace{-2mm}
\subsubsection{Leveraging Conditional Independence, Compositionally via \agl }\label{sec:condition}

We discuss two scenarios where our method outperforms the classifier-free approach by exploiting the conditional independence between inputs and additional controls.

\begin{example}[Scenario \(Y \perp C | X\)]
\label{ex:1}
If a new condition \(Y\) is conditionally independent of an existing condition \(C\) given \(X\), meaning \(p^{\diamond}(y | x, c) = p(y | x)\). This allows \agl to operate efficiently with just $(x,y)$ pairs, avoiding the need for $(c,x,y)$ triplets in the offline dataset.
\end{example}

This scenario is common in practice. For example, when using the Stable Diffusion pre-trained model~\citep{rombach2022high}, where $X$ is an image and $C$ is a text prompt, we may also want to condition the generations on $Y$, such as score functions like compressibility, aesthetic score, or color~\citep{black2023training}. These scores depend solely on the image and are independent of the prompt, meaning $Y \perp C | X$. We further explore this scenario in our experimental analysis in~\pref{sec:compressibility}.

\vspace{-2mm}
\paragraph{Multi-task conditional generation.}

Multi-task conditional generation is a significant challenge, requiring the integration of multiple controls into pre-trained models. In the following example, we show how our method can be extended to handle this.

\begin{example}[Scenario \(Y_1 \perp Y_2 | X, C\)]
\label{ex:2}
If two conditions, \(Y_1\) and \(Y_2\), exhibit conditional independence given \(X\) and \(C\), such that \(\log p(y_1, y_2 | x, c) = \log p(y_1 | x, c) + \log p(y_2 | x, c)\), the two classifiers can be trained separately using \((c, x, y_1)\) and \((c, x, y_2)\) triplets. Furthermore, if \(Y_1\) and \(Y_2\) are also independent of \(C\) given \(X\) (as in~\pref{ex:1}), the classifiers can be trained solely with \((x, y_1)\) and \((x, y_2)\) pairs, significantly simplifying dataset construction.
\end{example}

This scenario is also common in practice.
For instance, with the Stable Diffusion pre-trained model, where $X$ is an image and $C$ is a text prompt, additional attributes like $Y_1$ (compressibility) and $Y_2$ (color) depend only on the image, not the prompt. Thus, we can leverage the conditional independence of 
$Y_1$ and $Y_2$ from $C$ given $X$ to simplify the implementation of \agl.
The effectiveness of \agl in this context is further validated experimentally in~\pref{sec:multitask}.

\vspace{-2mm}
\paragraph{Can classifier-free guidance leverage conditional independence?}

The applicability of conditional independence in classifier-free guidance, which directly models \(p_{\gamma}(\cdot|c,y)\), is uncertain. For instance, when \(Y \perp C | X\) as in~\pref{ex:1}, our method only requires \((x,y)\) pairs, while classifier-free guidance typically needs \((c,x,y)\) triplets. when \(Y_1 \perp Y_2 | C, X\) as in~\pref{ex:2}, our approach utilizes triplets \((c,x,y_1)\) and \((c,x,y_2)\). 
However, as far as we are concerned, quadruples \((c, x, y_1, y_2)\) are necessary for classifier-free guidance, and acquiring such data at scale could pose a bottleneck.

\vspace{-2mm}
\subsubsection{Statistical Efficiency} 

We present the rationale for our approach being more sample-efficient than classifier-free guidance. Most importantly, we leverage a pre-trained model to sample from \(p^{\pre}(x|c)\), which is already trained on large datasets. This allows us to focus only on learning the classifier \(p^{\diamond}(y|x,c)\) from offline data. As a result, any statistical errors from the offline data affect only the classifier learning step~\eqref{eq:learn_classifier}. In contrast, classifier-free guidance attempts to model the entire distribution \(p_{\gamma}(\cdot|c,y)\) directly from offline data. Therefore, our method is more sample-efficient by learning only the necessary components from the offline data.

\section{Experiments}
\label{sec:exps}
\vspace{-2mm}

We compare \agl with five baselines: (1) \textbf{Reconstruction Guidance}. It attempts to alleviate 
 the approximation error of classifier guidance via reconstruction. (2) \textbf{Classifier-Free} guidance~\citep{ho2022classifier}\footnote{Implementing the \textbf{Classifier-Free} baseline in our setting would require using the pre-trained diffusion model to augment $x$ on certain $c$. Please refer to~\pref{app:cls-free} for details.}.
 (3) \textbf{SMC} (Sequential Monte Carlo). Recent works~\citep{wu2024practical,phillips2024particle} leverage resampling techniques to approximate distributions in diffusion models across a batch of samples (i.e., particle filtering). This method is training-free. (4) \textbf{SVDD}~\citep{li2024derivative}. It is a decoding-based method that iteratively selects preferable samples during each diffusion step based on reward signals. For our conditioning task, we use a trained classifier as the reward function. This method is also training-free and operates at inference time. (5) \textbf{MPGD}~\citep{he2024manifold}. This method optimizes the predicted clean data $x_0$ during inference based on a manifold hypothesis. While such refinement incurs additional computational costs during inference, it does not fine-tune diffusion model weights, remaining a training-free approach.
For more detailed information on each experiment, such as dataset, architecture, and baselines, please refer to~\pref{app:experiment}.

\vspace{-2mm}
\paragraph{Experimental setup.} 
For image experiments (\pref{sec:compressibility},~\pref{sec:multitask}), we use Stable Diffusion v1.5~\citep{rombach2022high} as the pre-trained model $p^\pre(x|c)$, here $c$ is a text prompt (e.g., ``cat'' or ``dog'') and  $x$ is the corresponding image. For the additional control $y$, we validate compressibilities and aesthetic scores. 

\subsection{Image: Conditional Generation on Compressibility}\label{sec:compressibility}

\begin{figure}[!th]
    \centering
    \begin{minipage}{0.22\textwidth}
     \begin{subfigure}{\linewidth}
        \centering
        \includegraphics[width=\textwidth]{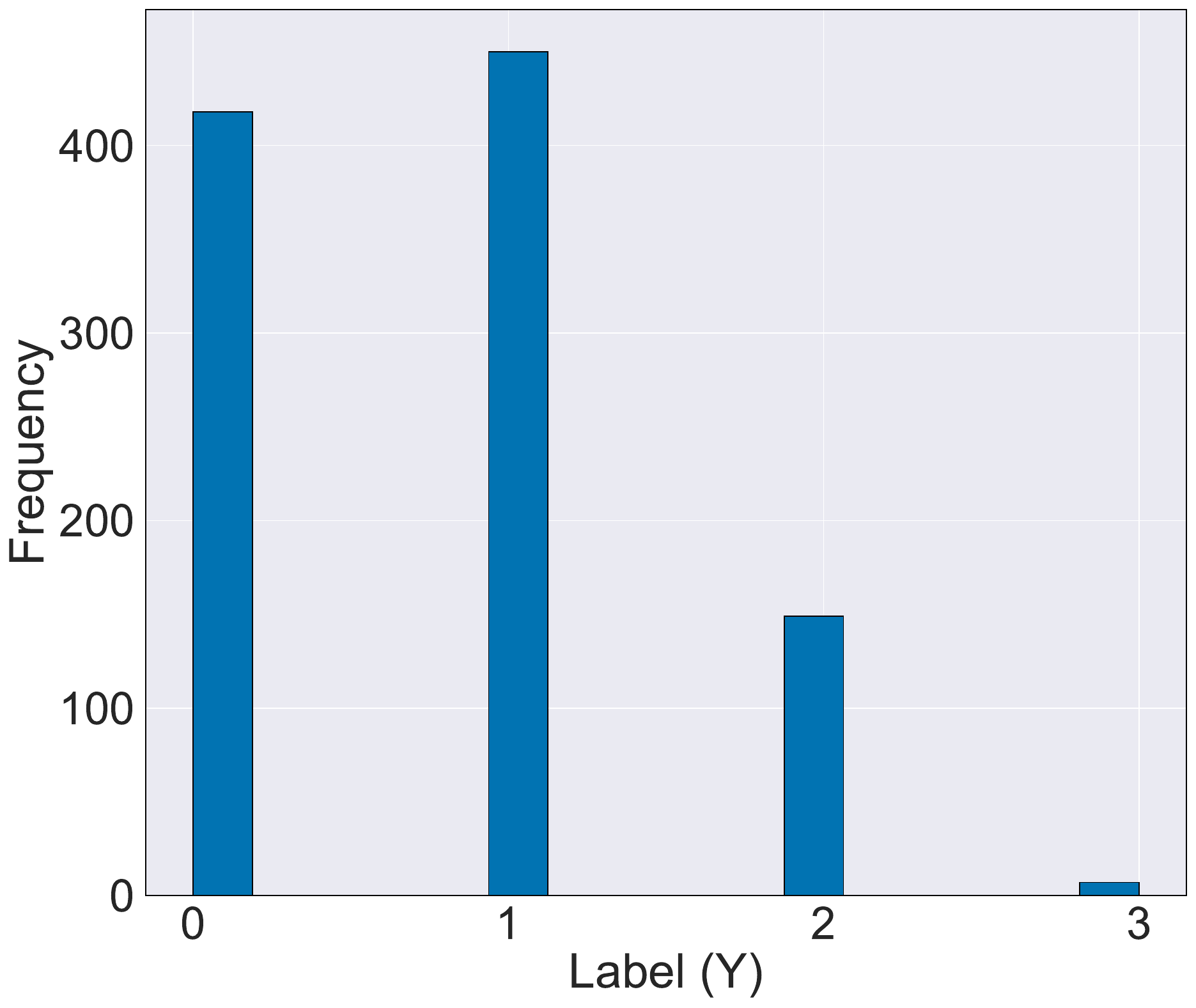}
        \caption{Histogram.}
        \label{fig:comp_hist}
        \end{subfigure}
    \end{minipage}\hfill
    \begin{minipage}{0.32\textwidth}
        \centering
        \begin{subfigure}{\linewidth}
        \scalebox{0.50}{
            \begin{tabular}{ccc}
                \toprule 
                 &  Accuracy $\uparrow$ & Macro F1 score $\uparrow$ \\ \midrule 
                 \textbf{Classifier-Free} & $0.33 \pm 0.04$ & $0.28$ \\ 
                 \textbf{Reconstruction Guidance} & $0.45 \pm 0.04$ & $0.45$ \\ 
                 \textbf{SMC} & $0.27 \pm 0.04$ & $0.22$ \\ 
                 \textbf{MPGD} & $0.70 \pm 0.04$ & $0.72$ \\
                 \textbf{SVDD} & $0.78 \pm 0.03$ & $0.78$ \\ 
                 \rowcolor{Gray} \textbf{\agl} & $\mathbf{1.0 \pm 0.0}$ & $\mathbf{1.0}$ \\ 
                 \bottomrule 
            \end{tabular}
        }
        \caption{Evaluation.}
        \label{tab:scores}
        \end{subfigure}
        
    \end{minipage}\hfill
    \begin{minipage}{0.32\textwidth}
    \begin{subfigure}{\linewidth}
        \centering
        \scalebox{0.50}{
            \begin{tabular}{ccc}
                \toprule
                 & BRISQUE $\downarrow$ & CLIP Score $\uparrow$ \\ 
                \midrule
                \agl & $30.8 \pm 1.5$ & $26.8 \pm 0.2$ \\
                \textbf{Classifier-Free} & $33.2 \pm 14.5$ & $25.7 \pm 1.0$ \\
                \textbf{Reconstruction Guidance} & $90.2 \pm 10.1$ & $22.8 \pm 0.6$ \\ 
                \bottomrule
            \end{tabular}
        }
    \caption{Image quality.}
        \label{tab:comparison}
    \end{subfigure}
        
    \end{minipage}
    
    \vspace{1em} 
    \begin{minipage}{0.45\textwidth}
    \begin{subfigure}{\linewidth}
        \centering
        \includegraphics[width=\textwidth]{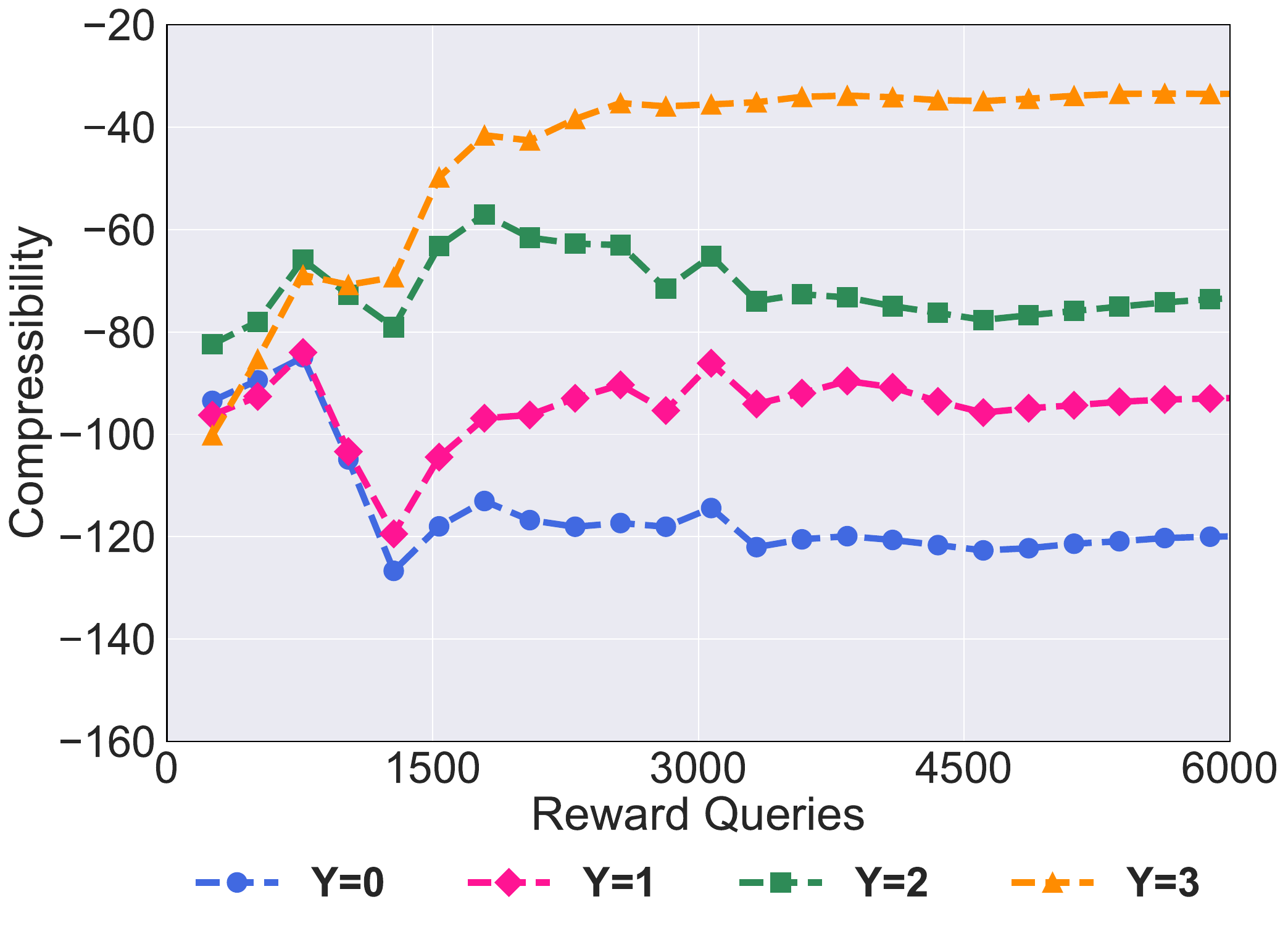}
        \caption{Training curves.}
        \label{fig:training_curves}
    \end{subfigure}
        
    \end{minipage}\hfill
    \begin{minipage}{0.55\textwidth}
    \begin{subfigure}{\linewidth}
         \centering
\includegraphics[width=.62\textwidth]{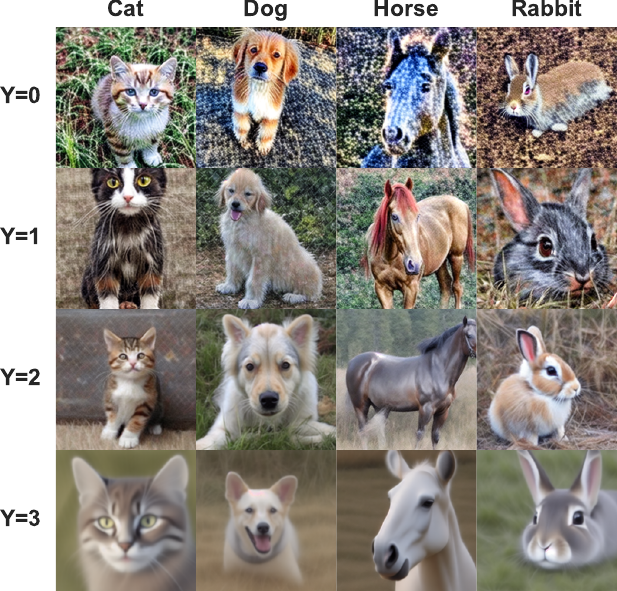}
        \caption{Images generated by \textbf{CTRL}.}
        \label{fig:images}
    \end{subfigure}
       
    \end{minipage}
    
    \caption{Results for conditioning on compressibility. Figure~\subref{fig:comp_hist} plots the histogram of samples generated by the pre-trained diffusion model. Figure~\subref{fig:training_curves} shows the mean compressibility curves during fine-tuning with four distinct lines representing each condition. It is evident that \agl effectively aligns the generated samples with their target compressibility levels via fine-tuning. Table~\subref{tab:scores},~\subref{tab:comparison} provide evaluation metrics, and Figure~\subref{fig:images} shows images generated by a \textbf{single} model fine-tuned with \textbf{CTRL}.}
\end{figure}

We start by conditioning generations on their file sizes, specifically focusing on \textbf{compressibility} \footnote{Unlike standard tasks in classifier/reconstruction guidance \citep{chung2022diffusion}, this score is non-differentiable w.r.t. images. This score function is only dependent on the image itself.}. 
Denoting compressibility as $\textbf{CP}$, we define $4$ compressibility labels as follows:
\textbf{$Y=0$}~: $\textbf{CP}<-110.0$;
\textbf{$Y=1$}~: $-110.0 \le \textbf{CP} < -85.0$; \textbf{$Y=2$}~: $-85.0 \le \textbf{CP} < -60.0$; \textbf{$Y=3$}~: $\textbf{CP} \ge -60.0$. Particularly, as depicted in \pref{fig:comp_hist}, generating samples conditioned on \textbf{$Y=3$} is challenging due to the infrequent occurrence of such samples from the pre-trained model.

\vspace{-2mm}
\paragraph{Results.}
We evaluate performance across four compressibility levels using the following steps: (1) generating samples conditioned on each $Y \in [0,1,2,3]$; (2) verifying alignment between the generated samples and their conditions; and (3) calculating classification accuracy and macro F1 score. 
Table~\ref{tab:scores} presents evaluation statistics.
Our results show that \agl accurately generates samples for each condition, including the rare $Y=3$ case from the pre-trained model (see Figure~\ref{fig:comp_hist}), notably outperforming the baselines.  
Figure~\ref{fig:images} showcases diverse images with correct compressibility levels for various prompts. Additional visualizations are available in~\pref{app:additional_image}.

In Table~\ref{tab:comparison}, we provide additional evaluation metrics for baseline methods and \textbf{CTRL}, specifically BRISQUE~\citep{mittal2012no} (lower values indicate better image quality) and CLIPScore~\citep{taited2023CLIPScore} (higher values reflect better text-image alignment). It is evident that \agl achieves better image quality while achieving the highest alignment score.

\begin{figure}[!ht]
	\centering
 \begin{minipage}{0.35\textwidth}
     \begin{subfigure}{\textwidth}
            \centering
\includegraphics[width=0.6\textwidth]{./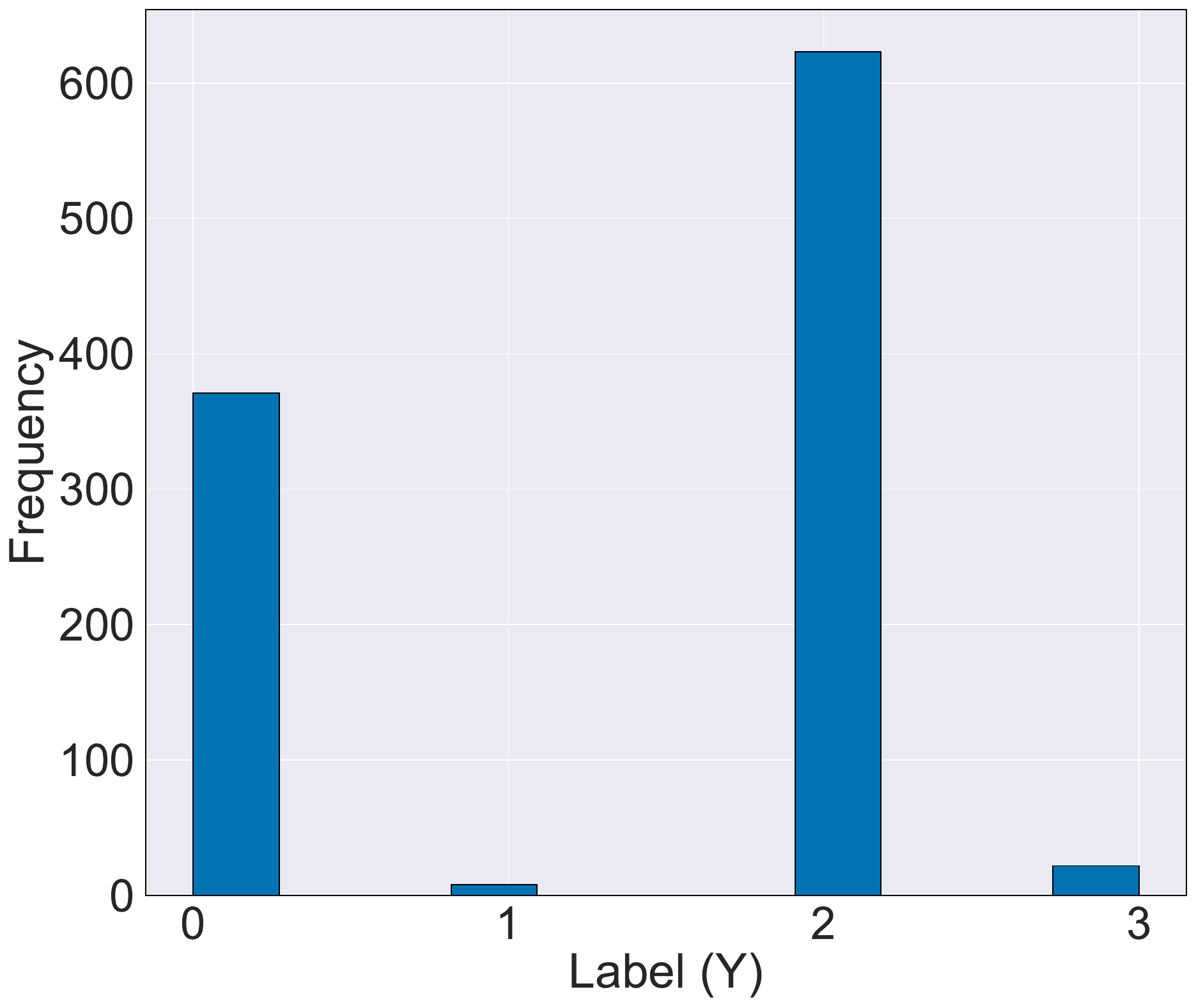}
            \caption{Histogram.}
            \label{fig:multitask_hist}
        \end{subfigure}
        
        \vspace{0.2cm}
        
        \begin{subfigure}{\textwidth}
        \centering
            \scalebox{0.58}{
        \begin{tabular}{cccc}
    \toprule 
      & Task  &  Accuracy $\uparrow$ & Macro F1 $\uparrow$ \\ \midrule 
       \textbf{Classifier-Free} & \textbf{CP} & $0.52 \pm 0.04$ & 
       $0.39$\\ 
      & \textbf{AS} & $0.59 \pm 0.04$ &
      $0.55$\\ 
     \midrule
     \textbf{\small Reconstruction Guidance} & \textbf{CP} & $0.61 \pm 0.04$ & 
     $0.55$\\ 
      & \textbf{AS} & $0.66 \pm 0.04$ & 
      $0.62$\\ 
     \midrule
     \textbf{SMC} & \textbf{CP} & $0.51 \pm 0.04$ & 
     $0.35$\\ 
      & \textbf{AS} & $0.49 \pm 0.04$ & 
      $0.48$\\ 
     \midrule
     \textbf{MPGD} & \textbf{CP} & $0.56 \pm 0.04$ & 
      $0.45$\\ 
      & \textbf{AS} & $0.48 \pm 0.04$& 
      $0.35$\\ 
     \midrule
      \textbf{SVDD} & \textbf{CP}  & $0.82 \pm 0.03$ & 
      $0.82$\\ 
      & \textbf{AS} & $0.59 \pm 0.04$ & 
       $0.57$ \\ 
     \midrule

\cellcolor{Gray}\textbf{\agl}  & \cellcolor{Gray}\textbf{CP} & \cellcolor{Gray}$\mathbf{0.94 \pm 0.02}$ & \cellcolor{Gray}$\mathbf{0.94}$ \\ 
    \cellcolor{Gray}& \cellcolor{Gray}\textbf{AS} & \cellcolor{Gray}$\mathbf{0.93 \pm 0.02}$ & \cellcolor{Gray}$\mathbf{0.93}$\\         
   \bottomrule 
    \end{tabular} }\caption{Evaluation.}		\label{fig:multitask_scores}
\end{subfigure}
 \end{minipage}
 \begin{minipage}{.55\linewidth}
 \begin{subfigure}{\linewidth}
 \centering
\includegraphics[width= .80\linewidth]{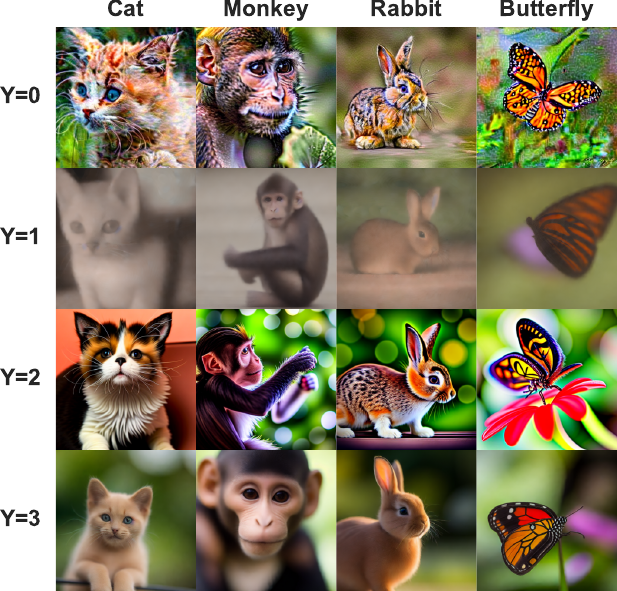}
		\caption{Images generated by \textbf{CTRL}.} \label{fig:multitask_images}
	\end{subfigure}
 \end{minipage}
   \caption{ Results for multi-task conditioning.
   Figure~\subref{fig:multitask_hist} plots the histogram of samples generated by the pre-trained diffusion model.
Table~\subref{fig:multitask_scores} presents the evaluation statistics. 
Figure~\subref{fig:multitask_images} displays images generated by a \textbf{single} model fine-tuned with \textbf{CTRL}.} 
\end{figure} 

\vspace{-2mm}
\subsection{Image: Multi-Task Conditional Generation}
\label{sec:multitask}

A more challenging setting involves multi-task conditional generation. In this experiment, in addition to compressibility, we \emph{simultaneously} aim to condition the generations on their aesthetic pleasingness. Following prior research~\citep{black2023training,fan2023dpok,uehara2024fine}, we employ an aesthetic scorer implemented as a linear MLP on top of the CLIP embeddings~\citep{radford2021clip}, which is trained on more than $400$k human evaluations. 

In this experiment, by leveraging conditional independence of $Y_1$ and $Y_1$ given $X$ (see~\pref{ex:2}), we aim to fine-tune the diffusion model to generate samples with \emph{compositional} conditions. Specifically, denoting compressibility  as \textbf{CP} and aesthetic score as \textbf{AS}, we define four compositional conditions as follows: \textbf{$Y=0$}~: $ \textbf{AS} < 5.7,~ \textbf{CP} < -70$; \textbf{$Y=1$}~: $\textbf{AS} < 5.7, ~ \textbf{CP} \ge -70$; \textbf{$Y=2$}~: $\textbf{AS} \ge 5.7,~ \textbf{CP} < -70$; \textbf{$Y=3$}~: $\textbf{AS} \ge 5.7,~ 
\textbf{CP} \ge -70$. Particularly, as depicted in \pref{fig:multitask_hist}, generating samples conditioned on \textbf{$Y=1$} or \textbf{$Y=3$} is challenging due to the infrequent occurrence of such samples from the pre-trained model.

\vspace{-2mm}
\paragraph{Results.} 
We follow the evaluation procedure outlined in~\pref{sec:compressibility}, with results summarized in Table~\ref{fig:multitask_scores}, demonstrating that \agl outperforms all baselines across both tasks by a big margin. 
Notably, \agl can generate samples rarely produced by the pre-trained model with over $90\%$ accuracy, particularly for the desired class $Y=3$ (highly aesthetic images with high compressibility). Producing such images is challenging as aesthetically pleasing images typically require more storage and thus have low compressibility.
Generated images are displayed in Figure~\ref{fig:multitask_images}.
More visualizations are provided in~\pref{app:additional_image}.

\section{Conclusion} 
We introduce a provable RL-based fine-tuning approach for conditioning pre-trained diffusion models on additional controls. Compared to classifier-free guidance, our proposed method uses the offline dataset more efficiently and is able to leverage the conditional independence assumption, thereby greatly simplifying the construction of the offline dataset. 
Our approach is empirically validated across three settings: image generation conditioned on a new task, image generation conditioned on the composition of two new tasks, and biological sequence design.

\newpage 
\paragraph{Reproducibility Statement.}
We submit the code for our image experiments as supplementary materials. Complete proofs of our theoretical results are provided in~\pref{app:proof}. Detailed information about our experiments, including dataset descriptions, model architecture, and baseline implementations, can be found in~\pref{app:experiment}.

\bibliographystyle{chicago}
\bibliography{rl}

\newpage
\appendix

\section{Training Diffusion Models }\label{app:diffusion}

In standard diffusion models, given a training dataset $\{x^{\langle j \rangle}\}\sim p_{\mathrm{data}}(\cdot)$, the goal is to construct a transport that maps noise distribution and data distribution $p_{\mathrm{data}} \in \Delta(\Xcal)$ ($\Xcal =\ \RR^d$). More specifically, suppose that we have an SDE \footnote{In standard diffuson models, the direction is reversed, i.e., $x_T$ corresponds to the noise distribution.  }:

\begin{align}\label{eq:originalSDE}
    d x_t = f(t,x_t;\theta)dt + \sigma(t)dw_t,  
\end{align}
where $f:[0,T] \times \RR^d \to \RR^d$ is a drift coefficient, $\sigma:[0,T]\to \RR$ is a diffusion coefficient, $w_t$ is $d$-dimensional Brownian motion, and initial state $x_0 \sim p_{\mathrm{ini}}$ where $p_{\mathrm{ini}} \in \Delta(\Xcal)$ denotes the initial distribution. By denoting the marginal distribution at time $T$ by $p^{\theta}_T(x)$, a standard goal in training diffusion models is to learn the parameter $\theta$ so that $p^{\theta}_T(x) \approx p_{\mathrm{data}}$. This means we can (approximately) sample from $p_{\mathrm{data}}$ by following the SDE \eqref{eq:originalSDE} from $0$ to $T$.

To train diffusion models, we first introduce a (fixed) \emph{forward} reference SDE, which gradually adds noise to $p_{\mathrm{data}}$:
\begin{align}\label{eq:originalSDE2}
    dz_t =  \bar f(t, z_t) dt + \bar \sigma(t) dw_t, \quad z_0 \sim p_{\mathrm{data}}, 
\end{align}
where $\bar f:[0,T] \times \RR^d \to \RR^d$ is a drift coefficient, $\bar \sigma:[0,T]\to \RR$ is a diffusion coefficient. An example is the classical denoising diffusion model~\citep{ho2020denoising}, also known as the variance-preserving (VP) process, which sets $\bar f=- 0.5 z_t,\bar \sigma = 1$.

Now, we consider the time-reversal SDE \citep{anderson1982reverse}, which reverses the direction of SDE while keeping the marginal distribution, as follows: 
\begin{align}
   \label{eq:time_reversal}
dx_t = \left\{ -\bar{f}(T-t, x_t) + \nabla \log q_{T-t}(x_t) \right\} dt + \bar{\sigma}(T-t) dw_t, \quad x_0 \sim \mathcal{N}(0,I_d).   
\end{align}
Here, \(q_t(\cdot)\) denotes the marginal distribution at time \(t\) for the distribution induced by the reference SDE, and $\nabla \log q_{T-t}(x_t)$ means a derivative w.r.t. $x_t$, which is often referred to as the \emph{score function}. Furthremore, when the time horizon \(T\) is sufficiently large, $z_t$ follows Gaussian noise distribution $\mathcal{N}(0,I_d)$.
Hence, if we could learn the score function, by following the SDE  \eqref{eq:time_reversal} starting from Gaussian noise, we can sample from the data distribution.

Then, we aim to learn the score function from the data. By comparing the time-reversal SDE with the original SDE, a natural parameterization is:
\[
f(t,x_t;\theta) = -\bar{f}(T-t, x_t) + s(T-t, x_t;\theta), \quad \sigma(t) = \bar{\sigma}(T-t),
\]

where \(s(T-t, x_t;\theta)\) is the parametrized neural network introduced to approximate the score function \(\nabla \log q_{T-t}(x_t)\).
Here, we can leverage the analytical form of the conditional distribution \(q_{T-t|0}(\cdot|\cdot)\)  (which is a Gaussian distribution derived from the reference SDE). This approach enables us to tackle the approximation problem via regression:
\begin{align}
\label{eq:learned}
    \hat{\theta} = \argmin_{\theta} \mathbb{E}_{t \in [0,T], z_0 \sim p_{\text{data}}, z_t \sim q_{t|0}(z_0)} \left[\lambda(t) \left\| s(t, z_t; \theta) - \nabla_{z_t} \log q_{t|0}(z_t|z_0) \right\|^2 \right],
\end{align}
where \( \lambda: [0,T] \to \mathbb{R} \) is a weighting function.

\newpage

\section{Planning Algorithm for \agl}
\label{sec:planning}

\begin{algorithm}[!ht]
\caption{Direct back-propagation for conditioning}\label{alg:guidance2}
\begin{algorithmic}[1]
\STATE \textbf{Input} Batchsize $n$, Learning rate $\eta$, Discretization step $\Delta t$, Exploratory 
distribution $\Pi \in \Delta(\Ccal \times \Ycal)$. 
\STATE \textbf{Itinialize}: $\psi = [ \{ \theta^{\mathrm{pre}} \}^{\top},\mathbf{0}^{\top}]$
 \FOR{$i \gets 1$ to $S$} 
\STATE We obtain $n$ trajectories
\begin{align*}\textstyle
    \{X^{\langle k \rangle}_0,\cdots,X^{\langle k \rangle}_T\}_{k=1}^n , \{Z^{\langle k \rangle}_0,\cdots,Z^{\langle k \rangle}_T\}_{k=1}^n. 
\end{align*}
following $(C^{\langle k \rangle},Y^{\langle k \rangle}) \sim \Pi(\cdot), X^{\langle k \rangle}_0 \sim \Ncal(0, I_d), Z_0 =0$, and 
\begin{align*}\textstyle
    &X^{\langle k \rangle}_t =X^{\langle k \rangle}_{t-1} +  g(t-1, C^{\langle k \rangle}, Y^{\langle k \rangle}, X^{\langle k \rangle}_{t-1};\psi_i )\Delta t + \sigma(t) (\Delta w_t),\,\quad  \Delta w_t\sim \Ncal(0, (\Delta t)^2),  \\ 
    & Z^{\langle k \rangle}_t= Z^{\langle k \rangle}_{t-1} + \frac{\|g(t-1,C^{\langle k \rangle},Y^{\langle k \rangle},X^{\langle k \rangle}_{t-1}; \psi_i ) - f^{\pre} (t-1, C^{\langle k \rangle}, X^{\langle k \rangle}_{t-1};\theta^{(i)} ) \|^2}{2\sigma^2(t-1)}\Delta t. 
\end{align*}
\STATE Update a parameter: 
\begin{align*}
   \psi_{i+1}=\psi_{i} + \eta \nabla_{\psi}\left\{ \frac{1}{n}\sum_{k=1}^n \left[\gamma \log{\hat{p}(Y^{\langle k \rangle}|X^{\langle k \rangle}_T,C^{\langle k \rangle})} - Z^{\langle k \rangle}_T  \right]\right \} \bigg{|}_{\psi=\psi_i},  
\end{align*}
\ENDFOR
    \STATE \textbf{Output}: Parameter $\psi_S$
\end{algorithmic}
\end{algorithm}

Inspired by \citep{clark2023directly,prabhudesai2023aligning}, our planning algorithm, listed in~\pref{alg:guidance2}, is based on direct back-propagation. This method is iterative in nature. During each iteration, we: (1) compute the expectation over trajectories ($\EE_{x_{0:T} \sim \PP^{g(\cdot;\psi)}}$) using discretization techniques such as Euler-Maruyama; (2) directly optimize the KL-regularized objective function with respect to parameters of the augmented model (i.e., $\psi$). 

In practice, such computation might be memory-intensive when there are numerous discretization steps and the diffusion models have a large number of parameters. This is because gradients would need to be back-propagated through the diffusion process. To improve computational efficiency, we recommend employing specific techniques, including (a) only fine-tuning LoRA~\citep{hu2021lora} modules instead of the full diffusion weights, (b) employing gradient checkpointing~\citep{gruslys2016memory,chen2016training} to conserve memory, and (c) randomly truncating gradient back-propagation to avoid computing through all diffusion steps~\citep{clark2023directly,prabhudesai2023aligning}.

\begin{remark}[PPO]
In~\pref{alg:guidance}, we employ 
 direct back-propagation (i.e.,~\pref{alg:guidance2}) for planning (i.e., solving the RL problem~\eqref{eq:control}), which necessarily demands the differentiability of the classifier. 
 If the classifier is non-differentiable, we suggest using Proximal Policy Optimization (PPO) for planning, such as~\citet{schulman2017proximal,black2023training,fan2023dpok}. Other parts remain unchanged.
\end{remark}

\section{Inference Technique in Classfier-Free Guidance}
\label{app:inference_technique}
Although the fine-tuning process sets the guidance level for the additional conditioning (i.e., $y$) at a specific $\gamma$, classifier-free guidance makes it possible to adjust the guidance strength freely during inference. Recall that the augmented model is constructed as:
$g(t,c,y,x;\psi)$ where $\psi = [\theta^{\top}, \phi^{\top}]^{\top}$. Suppose we have obtained a drift term $\hat{g}$, parametrized by $\hat{\psi} = [\hat{\theta}^{\top}, \hat{\phi}^{\top}]^{\top}$ from running~\pref{alg:guidance}. In inference, we may alter the guidance levels by using the following drift term in the SDE~\eqref{eq:new_model}

\begin{align*}
    &\quad g_{\gamma_1, \gamma_2}(t,c,y,x_t)\\
    &= \underbrace{g(t,\emptyset, \emptyset, x_t ; \hat{\psi}) + \gamma_1 ( g(t,c, \emptyset, x_t ; \hat{\psi}) - g(t,\emptyset,\emptyset, x_t ; \hat{\psi}) )}_{\text{Term $1$: pre-trained diffusion model conditioned on $\Ccal$}} + \underbrace{\gamma_2 (g(t,c,y,x_t; \hat{\psi}) - g(t,c,\emptyset,x_t; \hat{\psi}))}_{\text{Term $2$: additional conditioning on $\Ycal$} }
\end{align*}
where $\emptyset$ indicates the unconditional on $\Ycal$ or on $\Ccal$. In the above, both $\gamma_1$ and $\gamma_2$ do not necessarily need to equal $\gamma$. They can be adjusted respectively to reflect guidance strength levels for two conditions.

\section{Proofs}\label{app:proof}

\subsection{Important Lemmas}
We first introduce several important lemmas to prove our main statement. 

First, recall that $\PP^{g}(\cdot|c,y)$ is the induced distribution by the SDE:
\begin{align*}
    dx_t = g(t, c, y, x_t) dt + \sigma(t) d w_t, \quad x_0=x_{\mathrm{ini}}
\end{align*}
over $\Kcal$ conditioning on $c$ and $y$. Similarly, denote  $\PP^{\pre}(\cdot|c)$ by the induced distribution by the SDE:
\begin{align*}
        dx_t = f^\pre(t, c, x_t) dt + \sigma(t) d w_t, \quad x_0=x_{\mathrm{ini}}
\end{align*}
over $\Kcal$ conditioning on $c$.

\begin{lemma}[KL-constrained reward]
\label{lem:KL_obj}
The objective function in \eqref{eq:control} is equivalent to 
\begin{align}
   \mathrm{obj} = \EE_{(c,y) \sim \Pi, \PP^{g}(\cdot|c,y)} [ \gamma \log{p^\diamond(y|x_T,c)} - \KL( \PP^{g}(\cdot|c,y) \| \PP^{\pre}(\cdot |c ) ]. 
\end{align}

\end{lemma}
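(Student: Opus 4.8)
The plan is to observe that the two objectives differ only in their second term: \pref{lem:KL_obj} replaces the path integral $\frac{1}{2}\int_{0}^T \|f^{\pre}(s,c,x_s)-g(s,c,y,x_s)\|^2/\sigma^2(s)\,ds$ appearing inside the expectation of \eqref{eq:control} with the path-space KL divergence $\KL(\PP^{g}(\cdot|c,y)\|\PP^{\pre}(\cdot|c))$. Since the reward term $\gamma\log p^{\diamond}(y|x_T,c)$ and the outer expectation over $(c,y)\sim\Pi$ are untouched, it suffices to establish the pointwise identity, for each fixed $(c,y)$,
\begin{align*}
\EE_{x_{0:T}\sim\PP^{g}(\cdot|c,y)}\left[\frac{1}{2}\int_{0}^T \frac{\|f^{\pre}(s,c,x_s)-g(s,c,y,x_s)\|^2}{\sigma^2(s)}\,ds\right] = \KL(\PP^{g}(\cdot|c,y)\,\|\,\PP^{\pre}(\cdot|c)).
\end{align*}

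First I would invoke Girsanov's theorem. Both $\PP^{g}(\cdot|c,y)$ and $\PP^{\pre}(\cdot|c)$ are laws on the trajectory space $\Kcal$ of SDEs driven by the \emph{same} diffusion coefficient $\sigma(t)$ and sharing the \emph{identical} (Dirac) initial condition $x_0=x_{\mathrm{ini}}$, differing only in their drifts $g$ and $f^{\pre}$. Hence the two measures are mutually absolutely continuous, and Girsanov gives an explicit Radon--Nikodym derivative. Setting $u(s):=(g-f^{\pre})/\sigma(s)$ and writing everything under $\PP^{g}$, where $dx_s = g\,ds + \sigma(s)\,dw_s^{g}$ for a $\PP^{g}$-Brownian motion $w^{g}$, the Girsanov shift $dw_s^{\pre}=dw_s^{g}+u(s)\,ds$ lets the log-density expand as
\begin{align*}
\log\frac{d\PP^{g}(\cdot|c,y)}{d\PP^{\pre}(\cdot|c)} = \frac{1}{2}\int_{0}^T \frac{\|g-f^{\pre}\|^2}{\sigma^2(s)}\,ds + \int_{0}^T \frac{g-f^{\pre}}{\sigma(s)}\cdot dw_s^{g}.
\end{align*}
Taking the expectation under $\PP^{g}$ then yields the KL divergence, and the crucial point is that the Itô integral is a martingale with zero mean, so it drops out, leaving exactly the quadratic path integral above (using $\|f^{\pre}-g\|^2=\|g-f^{\pre}\|^2$). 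Combining this pointwise equality with the outer expectation over $(c,y)\sim\Pi$ gives the claim.

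The main obstacle is purely technical rather than conceptual: to legitimately apply Girsanov and to conclude that $\EE_{\PP^{g}}[\int_0^T u\cdot dw_s^{g}]=0$, one needs an integrability (e.g.\ Novikov-type) condition on $u(s)=(g-f^{\pre})/\sigma(s)$ ensuring that the exponential is a genuine martingale and that $\int u\cdot dw^{g}$ is a true (not merely local) martingale; I would simply assume the standard regularity on the drifts $g$ and $f^{\pre}$ used implicitly throughout the paper. A second point worth flagging is the role of the shared initial condition: because the two laws start from the same $x_{\mathrm{ini}}$, the initialization contributes no extra term, whereas distinct initial distributions would introduce an additional $\KL$ between them. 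With these caveats, the identity is immediate from Girsanov.
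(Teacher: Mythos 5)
Your proof is correct and follows essentially the same route as the paper: Girsanov's theorem gives the log Radon--Nikodym derivative between the two path measures as the quadratic drift-difference integral plus an It\^{o} integral, and the latter vanishes in expectation by the martingale property. Your additional remarks on the Novikov-type integrability condition and the shared Dirac initial condition are sound refinements the paper leaves implicit, but the argument is the same.
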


\begin{proof}

We calculate the KL divergence of  $\PP^{g}$ and $\PP^{\pre}$ as below 
\begin{align}\label{eq:KL}
     \KL( \PP^{g}(\cdot|c,y) \| \PP^{\pre}(\cdot |c ) = \EE_{ x_{0:T}\sim \PP^{g}(\cdot|c,y)} \left[ \int_{0}^T \frac{1}{2} \frac{\|g(t,c,y,x_t) - f^\pre(t,c,x_t)\|^2}{\sigma^2(t)  }dt \right]. 
\end{align}

This is because 
\begin{align*}
     &\quad \KL(\PP^{g}(\cdot|c, y) \| \PP^{\pre}(\cdot|c) ) \\
     &=\EE_{\PP^{g}(\cdot|c,y)}\left[\frac{d \PP^{g}(\cdot|c,y)}{d \PP^{\pre}(\cdot|c)} \right]  \\
    &=\EE_{\PP^{g}(\cdot|c, y)}\left[\int_{0}^T \frac{1}{2} \frac{\|g(t,c,y,x_t) - f^\pre(t,c,x_t)\|^2}{\sigma^2(t)  }dt + \int_{0}^T  \{g(t,c,y,x_t) - f^\pre(t,c,x_t)\} d w_t \right ] \tag{Girsanov theorem} \\ 
    &=\EE_{\PP^{g}(\cdot|c,y) }\left[ \int_{0}^T \frac{1}{2} \frac{\|g(t,c,y,x_t) - f^\pre(t,c,x_t)\|^2}{\sigma^2(t)  } dt \right].  \tag{Martingale property of Itô integral}
\end{align*}

Therefore, the objective function in \eqref{eq:control} is equivalent to 
\begin{align}
   \mathrm{obj} = \EE_{(c,y) \sim \Pi, \PP^{g}(\cdot|c,y)} [ \gamma \log{p^\diamond(y|x_T,c)} - \KL( \PP^{g} \| \PP^{\pre} ) ]. 
\end{align}

\end{proof}

\paragraph{Optimal value function.}
For the RL problem~\eqref{eq:control}, it is beneficial to introduce the optimal  optimal value function \( v^\star_t(x|c,y) \) at any time \( t \in [0,T] \), given $x_t=x$, conditioned on parameters \( c \) and \( y \) defined as:
\begin{align}
\label{eq:optimal_value_function}
    v^\star_t(x|c,y) = \max_{g} \mathbb{E} \left[ \gamma \log p^\diamond(y|x_T,c) - \frac{1}{2}\int_t^T  \frac{ \|f^{\mathrm{pre}}(s,c,x_s) - g(s,c,y,x_s)   \|^2 }{\sigma^2(s) } \, ds \; \Bigg| \; x_t = x, c, y \right].
\end{align}

Specifically, we note that $v_T(x|c,y) = \gamma \log{p^\diamond(y|x,c)}$ represents the terminal reward function (i.e., a loglikelihood in our MDP), while $v_0$ represents the original objective function~\eqref{eq:control} that integrates the entire trajectory's
KL divergence along with the terminal reward.

Below we derive the optimal value function in analytical form.
\begin{lemma}[Feynman–Kac Formulation]\label{lem:Feynman}
At any time $t \in [0,T]$, given $x_t = x$, and conditioned on $c$ and $y$, we have the optimal value function $v_t^*(x|c,y)$ (induced by the optimal drift term $g^*$) as follows
$$
    \exp\left(v^{\star}_{t}(x|c,y) \right)= \EE_{\PP^{\pre}(\cdot|c)}\left [ (p^\diamond(y|x_T,c))^\gamma |x_t=x, c \right].$$
\end{lemma}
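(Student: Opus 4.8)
The plan is to establish the Feynman--Kac formula by showing that the optimal value function satisfies a Hamilton--Jacobi--Bellman (HJB) equation, and then verifying that the claimed expectation is the unique solution. First I would write down the HJB equation associated with the stochastic control problem in \eqref{eq:optimal_value_function}. Since the controlled dynamics are $dx_t = g\,dt + \sigma(t)\,dw_t$ and the running cost is $-\tfrac{1}{2}\|f^{\pre} - g\|^2/\sigma^2$, the HJB equation for $v^{\star}_t$ reads
\begin{align*}
\partial_t v^{\star}_t + \max_{g}\left\{ -\frac{\|f^{\pre} - g\|^2}{2\sigma^2} + g^{\top}\nabla_x v^{\star}_t \right\} + \frac{\sigma^2}{2}\Delta_x v^{\star}_t = 0,
\end{align*}
with terminal condition $v^{\star}_T(x|c,y) = \gamma\log p^{\diamond}(y|x,c)$. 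The inner maximization over $g$ is a simple quadratic, solved pointwise by $g^{\star} = f^{\pre} + \sigma^2 \nabla_x v^{\star}_t$, which incidentally recovers the optimal drift expression of \pref{lem:again_doob}. Substituting this maximizer back collapses the equation into
\begin{align*}
\partial_t v^{\star}_t + \frac{\sigma^2}{2}\|\nabla_x v^{\star}_t\|^2 + f^{\pre\top}\nabla_x v^{\star}_t + \frac{\sigma^2}{2}\Delta_x v^{\star}_t = 0.
\end{align*}

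The second step is the standard Hopf--Cole (exponential) change of variables. Setting $w_t := \exp(v^{\star}_t)$, I would compute that the nonlinear $\|\nabla v^{\star}\|^2$ term is exactly what is needed to linearize: the identities $\nabla w_t = w_t\,\nabla v^{\star}_t$ and $\Delta w_t = w_t(\Delta v^{\star}_t + \|\nabla v^{\star}_t\|^2)$ turn the quadratic HJB equation into the linear backward Kolmogorov (Feynman--Kac) PDE
\begin{align*}
\partial_t w_t + f^{\pre\top}\nabla_x w_t + \frac{\sigma^2}{2}\Delta_x w_t = 0,
\end{align*}
with terminal condition $w_T(x|c,y) = \exp(\gamma\log p^{\diamond}(y|x,c)) = (p^{\diamond}(y|x,c))^{\gamma}$. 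This is precisely the generator of the \emph{uncontrolled} pre-trained diffusion \eqref{eq:pre_trained}, so by the classical Feynman--Kac theorem the unique solution is the conditional expectation of the terminal payoff propagated under $\PP^{\pre}$, namely $w_t(x|c,y) = \EE_{\PP^{\pre}(\cdot|c)}[(p^{\diamond}(y|x_T,c))^{\gamma}\,|\,x_t = x, c]$. Exponentiating back gives exactly the claimed formula.

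I expect the main obstacle to be rigor rather than the algebra: one must justify that the value function is smooth enough (a classical viscosity-solution or $C^{1,2}$ regularity argument) for the HJB derivation and the Itô-based Feynman--Kac verification to apply, and one must confirm the linearized PDE indeed matches the generator of $\PP^{\pre}$ rather than of the controlled process. An alternative, and perhaps cleaner, route that sidesteps the verification theorem is to argue directly via \pref{lem:KL_obj}: rewriting the objective as a KL-regularized expectation, the optimal tilted measure is the Gibbs/exponential-twist of $\PP^{\pre}$ by $(p^{\diamond})^{\gamma}$, and the normalizing constant of this tilt, computed conditionally from time $t$ onward, is exactly $\EE_{\PP^{\pre}}[(p^{\diamond}(y|x_T,c))^{\gamma}\,|\,x_t, c]$; this normalizer equals $\exp(v^{\star}_t)$ by the dynamic-programming interpretation of the log-partition function. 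I would likely present the HJB/Hopf--Cole derivation as the primary proof since it is the most self-contained, flagging the regularity of $v^{\star}$ as the one technical point to handle with care.
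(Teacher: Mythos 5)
Your proposal is correct and follows essentially the same route as the paper's proof: write the HJB equation, solve the inner quadratic maximization to obtain $g^{\star} = f^{\pre} + \sigma^2(t)\nabla_x v^{\star}_t$, substitute back, apply the exponential (Hopf--Cole) change of variables to linearize into the Kolmogorov backward equation generated by the pre-trained SDE, and invoke Feynman--Kac. The paper carries out exactly these steps (and, like you, leaves the regularity of $v^{\star}$ implicit), so no further comparison is needed.
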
  

\begin{proof}
From the Hamilton–Jacobi–Bellman (HJB) equation, we have 
 \begin{align}\label{eq:HJB}
     &\max_{u} \left \{\frac{\sigma^2(t)}{2}\sum_{i} \frac{d^2 v^{\star}_t(x|c,y)}{d x^{[i]}d x^{[i]}} + g \cdot \nabla v^{\star}_t(x|c,y) +\frac{d v^{\star}_t(x|c,y)}{d t}  - \frac{\|g - f^\pre\|^2_2}{2\sigma^2(t)}  \right \} =0.
 \end{align}
 where $x^{[i]}$ is a $i$-th element in $x$.  Hence, by simple algebra, we can prove that the optimal drift term satisfies 
 \begin{align*}
      g^{\star}(t,c,y,x) =  f^\pre(t,c,x) + \sigma^2(t)\nabla v^{\star}_t(x|c,y)  .  
 \end{align*}
By plugging the above into the HJB equation \eqref{eq:HJB}, we get 
  \begin{align}\label{eq:HJB2}
     &\frac{\sigma^2(t)}{2}\sum_{i} \frac{d^2 v^{\star}_t(x|c,y)}{d x^{[i]}d x^{[i]}} +f^\pre \cdot \nabla v^{\star}_t(x|c,y) +\frac{d v^{\star}_t(x|c,y)}{d t} +  \frac{\sigma^2(t) \| \nabla v^{\star}_t(x|c,y) \|^2_2}{2} =0,
 \end{align}
 which characterizes the optimal value function. Now, using \eqref{eq:HJB2}, we can show 
  \begin{align*}
     &\frac{\sigma^2(t) }{2}\sum_{i} \frac{d^2 \exp(v^{\star}_t(x|c,y))}{d x^{[i]}d x^{[i]}} +f^\pre \cdot \nabla \exp(v^{\star}_t(x|c,y)) +\frac{d \exp(v^{\star}_t(x|c,y))}{d t} \\ 
     &=\exp \left (v^{\star}_t(x|c,y)\right)\times \left \{ \frac{\sigma^2(t)}{2}\sum_{i} \frac{d^2 v^{\star}_t(x|c,y)}{d x^{[i]}d x^{[i]}} +f^\pre \cdot \nabla v^{\star}_t(x|c,y) +\frac{d v^{\star}_t(x|c,y)}{d t} +  \frac{\sigma^2(t) \| \nabla v^{\star}_t(x|c,y) \|^2_2}{2} \right\} \\
     &=0.
 \end{align*}
 Therefore, to summarize, we have 
 \begin{align}\label{eq:optimal}
     &\frac{\sigma^2(t) }{2}\sum_{i} \frac{d^2 \exp(v^{\star}_t(x|c,y))}{d x^{[i]}d x^{[i]}} +f^\pre \cdot \nabla \exp(v^{\star}_t(x|c,y)) +\frac{d \exp(v^{\star}_t(x|c,y))}{d t} = 0, \\
     & v^{\star}_T(x|c,y) = \gamma \log{p^\diamond(y|x,c)}. 
 \end{align}
 Finally, by invoking the Feynman-Kac formula \citep{shreve2004stochastic}, we obtain the conclusion:  
  \begin{align*}
    \exp\left(v^{\star}_{t}(x|c,y) \right)= \EE_{\PP^{\pre}(\cdot|x_t,c)}\left [(p^\diamond(y|x_T,c))^\gamma| x_t=x, c \right].
\end{align*}  
\end{proof}

\subsection{Proof of \pref{thm:key}} 
\label{app:detailed2}

Firstly, we aim to show that the optimal conditional distribution over $\Kcal$ on $c$ and $y$ (i.e., $\PP^{g^{\star}}(\tau|c,y)$) is equivalent to 
\begin{align*}
    \frac{ \PP^{\pre}(\tau|c)(p^\diamond(y|x_T,c))^\gamma}{C(c,y) },\quad C(c,y):= \exp(v^{\star}_0(x_0|c,y))). 
\end{align*}
To do that, we need to check that the above is a valid distribution first. This is indeed valid because the above is decomposed into 
\begin{align}\label{eq:valid}
    \underbrace{ \frac{(p^\diamond(y|x_T,c))^\gamma \cdot \PP^{\pre}(x_T|c)  }{C(c,y)}}_{(\alpha 1)} \times \underbrace{\PP^{\pre}(\tau|c,x_T)}_{ (\alpha 2)},  
\end{align}
and both $(\alpha 1),(\alpha 2)$ are valid distributions. Especially, for the term $(\alpha 1)$, we observe
\begin{align*}
    C(c,y) = \int (p^\diamond(y|x_T,c))^\gamma 
 d\PP^{\pre}(x_T|c)) =\EE_{\PP^{\pre}(\cdot|c)}[(p^\diamond(y|x_T,c))^\gamma] =   \exp(v^{\star}_0(x_0|c,y)). \tag{cf.~\pref{lem:Feynman} }
\end{align*}
Now, after checking \eqref{eq:valid} is a valid distribution, we calculate the KL divergence: 
\begin{align*}
&\KL \left (\PP^{g^{\star}}(\tau|c,y) \bigg{\|} \frac{\PP^{\pre}(\tau|c) (p^\diamond(y|x_T,c))^\gamma}{C(c,y) }  \right )\\ 
    & = \KL(\PP^{g^{\star}}(\tau|c,y) \|\PP^{\pre}(\tau|c))-\EE_{ \PP^{g^{\star}}(\cdot|c,y) }\left[\gamma \log{p^\diamond(y|x_T,c)} - \log C(c,y)\right ]\\
    &= \EE_{  \PP^{g^{\star}}(\cdot|c,y) }\left[\left\{ \int_{0}^T \frac{1}{2} \frac{\|g^{\star}(t,c,y, x_t) - f^\pre(t,c,x_t)\|^2}{\sigma^2(t)  }\right\}dt - \gamma \log{p^\diamond(y|x_T,c)}+ \log C(c,y) \right] \tag{cf. KL divergence~\eqref{eq:KL}}\\
    &= -v^{\star}_0(x_0|c,y) + \log C(c,y). \tag{Definition of optimal value function}
\end{align*}
Therefore, 
\begin{align*}
    \KL \left (\PP^{g^{\star}}(\tau|c,y) \Big\| \frac{\PP^{\pre}(\tau|c) (p^\diamond(y|x_T,c))^\gamma}{C(c,y) }  \right )=- v^{\star}_0(x_0|c,y) + \log C(c,y)=0. 
\end{align*}
Hence, 
\begin{align*}
    \PP^{g^{\star}}(\tau|c,y) = \frac{\PP^{\pre}(\tau|c) (p^\diamond(y|x_T,c))^\gamma}{C(c,y) }. 
\end{align*}

\paragraph{Marginal distribution at $t$.}
Finally, consider the marginal distribution at $t$. By marginalizing before $t$, we get 
\begin{align*}
    \PP^{\pre}(\tau_{[t,T]}|c)\times (p^\diamond(y|x_T,c))^\gamma/C(c,y). 
\end{align*}
Next, by marginalizing after $t$, 
\begin{align*}
    \PP^{\pre}_t(x|c)/C(c,y)\times \EE_{\PP^{\pre}(\cdot|c)}[ (p^\diamond(y|x_T,c))^\gamma |x_t=x, c].
\end{align*}
Using Feynman–Kac formulation in~\pref{lem:Feynman}, this is equivalent to 
\begin{align*}
    \PP^{\pre}_t(x|c)\exp(v^{\star}_t(x|c,y ))/C(c,y). 
\end{align*}

\paragraph{Marginal distribution at $T$.}

We marginalize before $T$. We have the following
\begin{align*}
    \PP^{\pre}_{T}(x|c)(p^\diamond(y|x_T,c))^\gamma/C(c,y). 
\end{align*}

\subsection{Proof of \pref{lem:again_doob}}
\label{app:pf_again_doob}

Recall $ g^{\star}(t,c,y,x)= f^\pre(t,c,x) + \sigma^2(t)\times \nabla_x v^{\star}_t(x|c,y)$ from the proof of \pref{lem:Feynman}, we have 
 \begin{align*}
     g^{\star}(t,c,y,x)= f^\pre(t,c,x) + \sigma^2(t)\times \nabla_x \log{\EE_{\PP^{\pre}(\cdot|c)}\left [(p^\diamond(y|x_T,c))^\gamma| x_t=x,c \right]}. 
\end{align*}

\section{Details of Experiments}\label{app:experiment}

\subsection{Implementation of Classifier-Free Baseline} 
\label{app:cls-free}
The effectiveness of classifier-free guidance often relies on a sufficiently large offline dataset ${(c,x,y)}$. However, in our experiments (\pref{sec:compressibility} and~\pref{sec:multitask}), we only have access to offline datasets ${(x,y)}$ and ${(x,y_1,y_2)}$ respectively. Thus, to implement a classifier-free guidance baseline in this context, we leverage the pre-trained diffusion model for data augmentation. The procedure for~\pref{sec:compressibility} is outlined below. The procedure for~\pref{sec:multitask} is similar.

\paragraph{Data augmentation.}
Consider the scenario where $Y \perp C | X$ and we have access to $p^\pre$ and offline dataset $\{(x,y)\}$. First, we use the offline data $\{(x,y)\}$ to train a classifier $\hat p:\Xcal \to \Delta(\Ycal)$. We then use $\hat p$ to generate triplets $(x,c,y) \sim p^{\pre}(x|c)\hat p(y|x)$ for given $c$. In practice, for text-to-image diffusion models, $c$ can be uniformly sampled from a set of prompts, such as animals. 
However, this process becomes computationally demanding when applied to large pre-trained models like Stable Diffusion\citep{rombach2022high}.

\paragraph{Potential limitations.} Given the data augmentation strategy, several limitations arise for classifier-free guidance. A primary concern is the accuracy of the trained classifier $\hat{p}(y|x)$. If the classifier is not sufficiently accurate, the generated $y$ values may be unreliable, compromising the quality of the augmented triplets $(x, c, y)$. Additionally, selecting the condition $c$ presents challenges. While models like Stable Diffusion~\citep{rombach2022high} are pre-trained on vast and diverse datasets with a wide range of prompts, we are constrained to a smaller, more limited set of prompts for $c$ in this context. This lack of diversity reduces the representativeness of the augmented data and may lead to mode collapse during fine-tuning—a common issue observed in fine-tuning of diffusion models~\citep{uehara2024bridging}.

\subsection{Implementation of Reconstruction Guidance Baseline}
As reviewed in~\pref{sec:classfier}, reconstruction guidance baseline employs the following approximation
$$
p(y|x_t,c)=\int p^{\diamond}(y|x_T,c)p(x_T|x_t,c)dx_T\approx  p(y|\hat x_T(x_t,c),c),
$$
where $\hat x_T(x_t,c)$ is the (expected) denoised sample given $x_t,c$, i.e., $\hat x_T(x_t,c)=\EE[x_T|x_t,c]$. We note that such approximation is often readily available from diffusion noise schedulers, such as DDIM scheduler~\citep{song2020denoising}.

Given such an approximation, we only need to learn $p^{\diamond}(y|x_T,c)$ from offline data. Accordingly, we can leverage the trained classifiers from~\pref{alg:guidance} (see~\pref{sec:algorithm}, \textbf{Step 2}).

As an inference-time technique, the choice of guidance strength is often subtle. We present ablation studies on guidance strength in~\pref{app:reconstruction}. For reporting classification metrics, as shown in Table~\ref{tab:scores} and Table~\ref{fig:multitask_scores}, we consistently select the optimal configuration for each baseline.

\subsection{Images} 
In this subsection, we provide details of experiments in Section~\ref{sec:exps}. We first explain the training details and list hyperparameters in Table~\ref{tab:hyper_params}. 

We use 4 A100 GPUs for all the image tasks.
We use the AdamW optimizer~\citep{loshchilov2018decoupled} with $\beta_1=0.9, \beta_2=0.999$ and weight decay of $0.1$.
To ensure consistency with previous research, in fine-tuning, we also employ training prompts that are uniformly sampled from $50$ common animals~\citep{black2023training,prabhudesai2023aligning}.

\begin{table}[!ht]
\centering
\caption{Training hyperparameters.}
\scalebox{0.8}{
\begin{tabular}{lcc}
\toprule
 Hyperparameter & compressibility (\pref{sec:compressibility}) & multi-task (\pref{sec:multitask})\\ \hline
 Classifier-free guidance weight on prompts (i.e., $c$) & $7.5$  & $7.5$\\
  $\gamma$ (i.e., strength of the additional guidance on $y$) & $10$  & $10$\\
 DDIM steps & $50$ & $50$ \\
Truncated back-propagation step & $K \sim \text{Uniform}(0,50)$ & $K \sim \text{Uniform}(0,50)$ \\
 Learning rate for LoRA modules & $1e^{-3}$ & $3e^{-4}$ \\
 Learning rate for the linear embeddings & $1e^{-2}$ & $1e^{-2}$ \\
Batch size (per gradient update)& $256$ & $512$ \\
Number of gradient updates per epoch& $2$ & $2$ \\
Epochs& $15$ & $60$ \\
\bottomrule
\end{tabular}
}
\label{tab:hyper_params}
\end{table}

\paragraph{Construction of the augmented score model.}
An important engineering aspect is how to craft the augmented score model architecture. For most of the diffusion models, the most natural and direct technique of adding another conditioning control is (1) augmenting the score prediction networks by incorporating additional linear embeddings, while using the existing neural network architecture and weights for all other parts. In our setting, we introduce a linear embedding layer that maps $|\Ycal|+1$ class labels to embeddings in $\RR^d$, where $d$ is the same dimension as intermediate diffusion states. Among all embeddings, the first $|\Ycal|$ embeddings correspond to $|\Ycal|$ conditions of our interest, whereas the last one represents the unconditional category (i.e., NULL conditioning) (2) for any $y \in \Ycal$, the corresponding embedding is added to the predicted score in the forward pass. During fine-tuning, the embeddings are initialized as zeros. We only fine-tune the first $|\Ycal|$ embeddings, and freeze the last one at zero as it is the unconditional label.

We note that, while it is possible to add additional conditioning by reconstructing the score networks like ControlNet~\citep{zhang2023adding}, in practice it is often desired to make minimal changes to the architecture of large diffusion models, e.g., Stable Diffusion~\citep{rombach2022high} to avoid the burdensome re-training. It is especially important to leverage pre-trained diffusion models in our setting where the offline dataset is limited, therefore a total retraining of model parameters can be struggling.

\paragraph{Sampling.} We use the DDIM sampler with $50$ diffusion steps~\citep{song2020denoising}.
Since we need to back-propagate the gradient of rewards through both the sampling process producing the latent representation and the VAE decoder
used to obtain the image, memory becomes a bottleneck. We employ two designs to alleviate memory usage following ~\citet{clark2023directly,prabhudesai2023aligning}:
(1) Fine-tuning low-rank adapter (LoRA) modules~\citep{hu2021lora} instead of tuning the original diffusion
weights, and (2) Gradient checkpointing for computing partial derivatives on demand~\citep{gruslys2016memory, chen2016training}. The two designs make it possible to back-propagate gradients through all 50 diffusing steps in terms of hardware.

\begin{table}[!ht]
    \centering
    \caption{Architecture of compressibility classifier.}
    \label{tab:ThreeLayerConvNet}
    \scalebox{0.85}{
    \begin{tabular}{cccc} \toprule 
        \# & Input Dimension & Output Dimension & Layer \\ \hline 
      1   & $C \times H \times W$   &   $64 \times H \times W$  & ResidualBlock (Conv2d(3, 64, 3x3), BN, ReLU) \\ 
      2   & $64 \times H \times W$   &   $128 \times \frac{H}{2} \times \frac{W}{2}$  & ResidualBlock (Conv2d(64, 128, 3x3), BN, ReLU) \\ 
      3   & $128 \times \frac{H}{2} \times \frac{W}{2}$   &   $256 \times \frac{H}{4} \times \frac{W}{4}$  & ResidualBlock (Conv2d(128, 256, 3x3), BN, ReLU) \\ 
      4   & $256 \times \frac{H}{4} \times \frac{W}{4}$   &   $256 \times 1 \times 1$  & AdaptiveAvgPool2d (1, 1) \\ 
      5   & $256 \times 1 \times 1$   &   $256$  & Flatten \\ 
      6   & $256$   &   num\_classes  & Linear \\ \bottomrule 
    \end{tabular}
    }
\end{table}

\begin{table}[!ht]
    \centering
    \caption{Architecture of aesthetic score classifier.}
    \label{tab:MLPDiffClass}
    \begin{tabular}{cccc} 
        \toprule 
        \# & Layer & Input Dimension & Output Dimension \\ 
        \midrule
        1 & Linear & $768$ & $1024$ \\
        2 & Dropout & - & - \\
        3 & Linear & $1024$ & $128$ \\
        4 & Dropout & - & - \\
        5 & Linear & $128$ & $64$ \\
        6 & Dropout & - & - \\
        7 & Linear & $64$ & $16$ \\
        8 & Linear & $16$ & \text{num\_classes} \\
        \bottomrule 
    \end{tabular}
\end{table}

\paragraph{Classifiers.} In our experiments, we leverage conditional independence for both compressibility and aesthetic scores tasks. Therefore, we only demand data samples $\{x_i,y_i\})$ in order to approximate conditional classifier $p(y|x,c)$. Specifically, 
\begin{itemize}
    \item compressibility: the classifier is implemented as a 3-layer convolutional neural network (CNN) with residual connections and batch normalizations on top of
the raw image space. The offline dataset is constructed by labeling a subset of $10$k images of the AVA dataset~\citep{murray2012ava}, employing JPEG compression. We train the network using Adam optimizer for $100$ epochs. Detailed architecture of the oracle can be found in Table~\ref{tab:ThreeLayerConvNet}.
\item aesthetic scores: the classifier is implemented as an MLP on top of CLIP embeddings~\citep{radford2021clip}. To train the classifier, we use the full AVA dataset~\citep{murray2012ava} which includes more than 250k human evaluations. The specific neural network instruction is listed in Table~\ref{tab:MLPDiffClass}.
\end{itemize}
Note that in training both classifiers, we split the dataset with $80\%$ for training and $20\%$ for validation. After training, we use the validation set to perform temperature scaling calibration~\cite {guo2017calibration}.

\subsection{Additional Results of Reconstruction Guidance Baseline}
\label{app:reconstruction}

In this subsection, we provide more results of \textbf{Reconstruction Guidance} for conditioning images on compressibility (see~\pref{sec:compressibility}).

In Figure~\ref{fig:confusing}, we plot the confusion matrix for samples generated by \textbf{Reconstruction Guidance}. For each condition, $128$ samples are generated and are evaluated. We find that this method struggles to generate samples accurately when conditioned on intermediate labels.

\begin{figure}[!ht]
\centering
\includegraphics[width=0.6\linewidth]{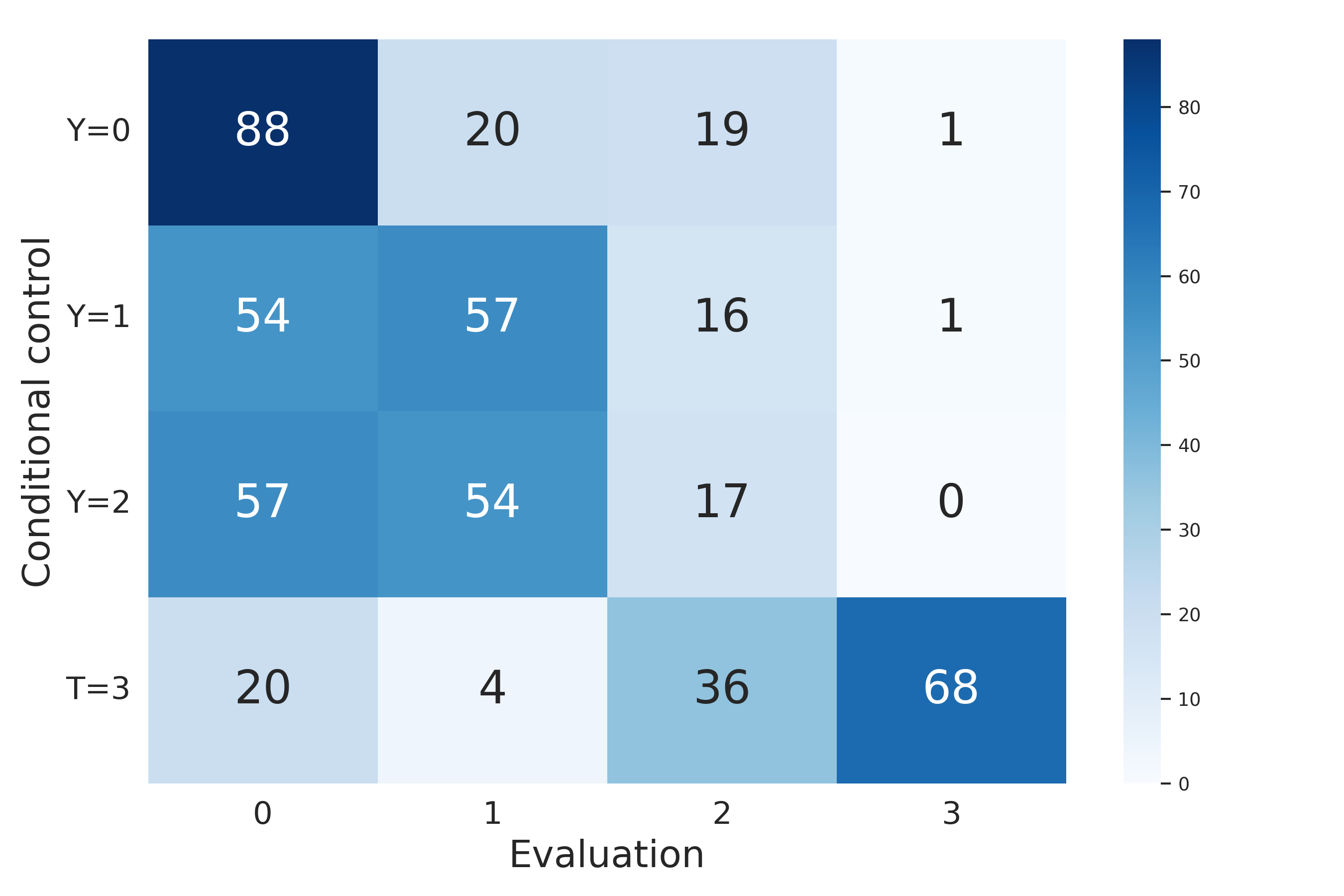}
    \caption{Confusion matrix for \textbf{Reconstruction Guidance}.}
    \label{fig:confusing}
\end{figure}

\begin{table}[!ht]
    \centering
\caption{Results of \textbf{Reconstruction Guidance} conditioned on compressibility. Essentially, guidance level$=0$ indicates that the generations are unconditional on $Y$.
}
\scalebox{0.9}{
    \begin{tabular}{p{0.24\linewidth}p{0.20\linewidth} p{0.13\linewidth} p{0.13\linewidth} } \toprule 
Conditional control ($Y$) & Guidance level ($\gamma$) & Accuracy $\uparrow$ & Mean score \\ \midrule    
          $0$ &$0$ &  $0.43$& $-110$\\ 
          $0$ &$5$ &  $0.64$& $-157.4$\\ 
          $0$ &$7.5$ &  $0.62$& $-148.2$\\
          $0$ &$10$ &  $0.52$& $-156.5$\\
           $0$ &$20$ &  $0.66$& $-152.5$\\
            $0$ &$50$ &  $0.69$& 
            $-151.6$\\
            \midrule
      $1$ &$0$ &  $0.45$& $-110$\\ 
          $1$ &$5$ &  $0.14$& $-152.0$\\ 
          $1$ &$7.5$ &  $0.12$& $-189.7$\\
          $1$ &$10$ &  $0.08$& $-163.1$\\
           $1$ &$20$ &  $0.06$& $-169.5$\\
            $1$ &$50$ &  $0$& $-194.0$\\
            \midrule
          $2$ &$0$ &  $0.13$& $-110$\\ 
          $2$ &$5$ &  $0.02$& $-104.9$\\ 
          $2$ &$7.5$ &  $0.10$& $-122.1$\\
          $2$ &$10$ &  $0.12$& $-111.3$\\
          $2$ &$20$ &  $0.08$& $-157.6$\\
          $2$ &$50$ &  $0.08$& $-173.5$\\
            \midrule
          $3$ &$0$ &  $0$& $-110$\\ 
          $3$ &$5$ &  $0.46$& $-71.7$\\ 
          $3$ &$7.5$ & $0.46$& $-67.6$\\
          $3$ &$10$ &  $0.53$& $-65.6$\\
          $3$ &$20$ &  $0.26$& $-112.4$\\
          $3$ &$50$ &  $0.32$& $-121.5$\\
   \bottomrule 
    \end{tabular} 
    }
    \label{tab:image_conditional}
\end{table}

For completeness, below we provide ablation studies on its hyper-parameters. In Table~\ref{tab:image_conditional}, we present the classification statistics for generations across different conditions ($y$) and guidance levels ($\gamma$). Recall that the four conditions are defined as follows: 
\textbf{$Y=0$}~: ~$\textbf{CP}<-110.0$, \textbf{$Y=1$}~: ~$-110.0 \le \textbf{CP} < -85.0$, \textbf{$Y=2$}~: ~$-85.0 \le \textbf{CP} < -60.0$, \textbf{$Y=3$}~: ~$\textbf{CP} \ge -60.0$.

Our analysis reveals several key insights:

\begin{enumerate}
    \item For $Y=0$ and $Y=3$, the accuracy of the generations improves as the guidance signal strength increases. This indicates a clear positive correlation between the guidance level and the accuracy of generation.
    \item Conversely, for intermediate $Y=1$ and $Y=2$, guidance signals decrease generation accuracy compared to the pre-trained model, suggesting difficulty in maintaining accuracy within these specific compressibility intervals. The challenge in generating samples with medium compressibility scores lies in hand-picking the guidance strength. For instance, generating samples conditioned on $Y=2$ requires compressibility scores between $-85$ and $-60$, making it difficult to apply optimal guidance without overshooting or undershooting the target values.
    \item Regarding the mean scores, distinct patterns are observed across different conditions and guidance levels:
    \begin{itemize}
        \item For $Y=0$, mean scores become more negative with increasing guidance levels.
        \item For $Y=1$, mean scores consistently drop with increasing guidance levels.
        \item For $Y=2$, mean scores initially improve slightly with increasing guidance levels but show a marked decline at $\gamma = 20$ and $\gamma = 50$, indicating a challenge in achieving the desired compressibility range.
        \item For $Y=3$, mean scores improve significantly with increased guidance, showing the best results at $\gamma = 10$, but then become more negative at higher guidance levels.
    \end{itemize}
\end{enumerate}

In summary, these observations suggest that while guidance can be beneficial for improving accuracy in extreme compressibility levels ($Y=0$ and $Y=3$), this method struggles with intermediate conditions ($Y=1$ and $Y=2$) due to the narrow range of acceptable scores and the non-linear effects of guidance strength on generation quality.

For each conditional control, samples are generated by choosing the best $\gamma$ according to Table~\ref{tab:image_conditional}. We report the evaluation statistics in Table~\ref{tab:scores}, and provide the confusion matrix in~\pref{fig:confusing}

\subsection{Additional Visualizations} 
\label{app:additional_image}
We provide more generated samples to illustrate the performances of \agl in~\pref{fig:more-samples} and~\pref{fig:more-samples-multitask}.

\begin{figure}[!ht]
    \centering
\includegraphics[width=.9\linewidth]{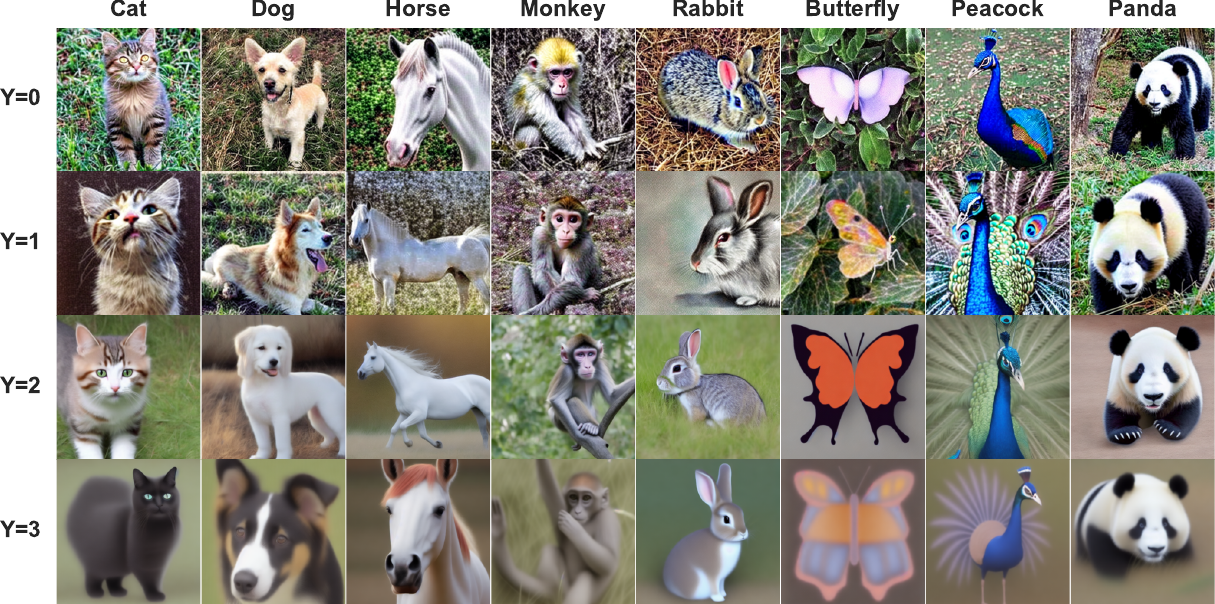}
    \caption{More images generated by \agl in the compressibility task.
    }
    \label{fig:more-samples}
\end{figure}

\begin{figure}[!ht]
    \centering
\includegraphics[width=.9\linewidth]{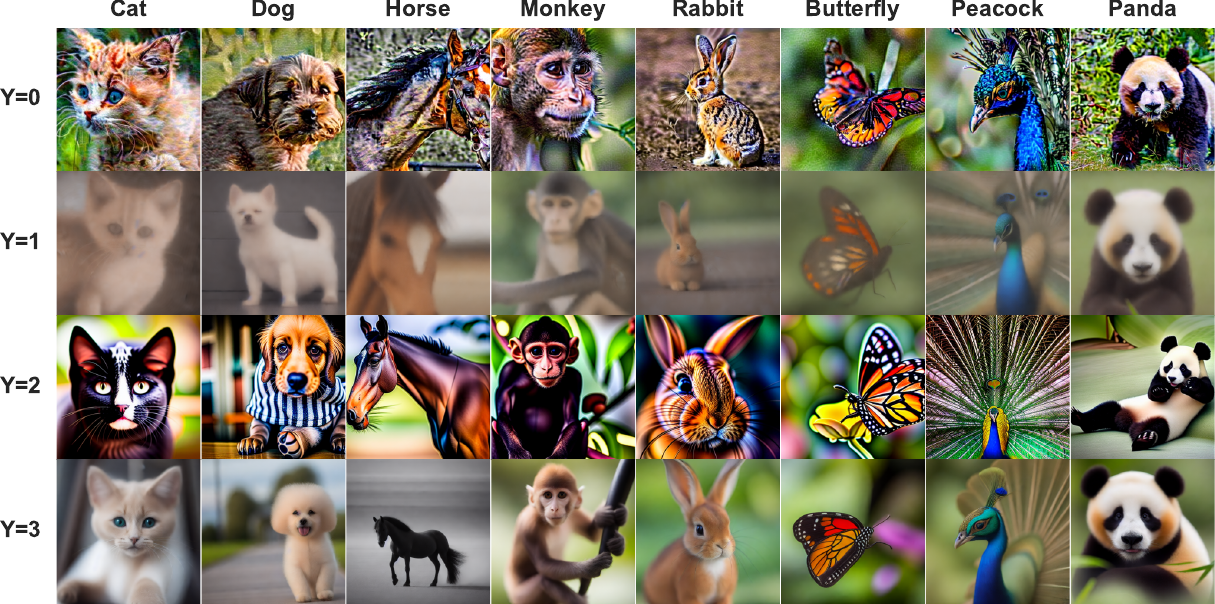}
    \caption{More images generated by \agl in the multi-task conditional generation.
    }
    \label{fig:more-samples-multitask}
\end{figure}

\newpage
\section{Limitations and Mitigation Strategies}
\label{sec:limitations}

\subsection{Computational Cost}

Fine-tuning large pre-trained diffusion models is computationally intensive, especially in multi-task settings. For instance, even in simpler cases, such as fine-tuning a model with a single reward function, the process can be slow—DDPO~\citep{black2023training} requires approximately 60 A100 GPU hours for fine-tuning Stable Diffusion~\citep{rombach2022high}. In contrast, our single-task experiment~\pref{sec:compressibility}, which conditions on four compressibility labels (a more complex task than optimizing a single reward), is more computationally efficient, requiring only around 20 A100 GPU hours. This demonstrates that our method is computationally efficient even when handling more challenging conditioning tasks.

Multi-task experiments~\pref{sec:multitask} are considerably more demanding, requiring around 200 A100 GPU hours. This is due to the added complexity of aligning the model with multiple control signals, often necessitating larger batch sizes and careful balancing between tasks. As a result, multi-task fine-tuning requires not only more GPU time but also careful balancing to prevent overfitting to specific tasks.

Below, we discuss promising ways to help reduce the computational burden without sacrificing performance.
\paragraph{Mitigation strategies.} To further reduce computational costs, an effective strategy is to truncate backpropagation to a small fixed number of steps, such as $3$ or $5$. As noted by \citep{clark2023directly}, truncating the gradient flow in direct backpropagation to fewer than $10$ steps not only significantly reduces computational overhead but also improves optimization stability by mitigating gradient explosion. Interestingly, performance begins to degrade when the number of steps exceeds $10$, suggesting that shorter truncation steps (even as few as $1$) can be more computationally efficient while maintaining or even improving model performance. 

Additionally, mixed precision training can be employed to further accelerate training.

\subsection{Memory Complexity}

As we have clarified in Section~\ref{sec:main_alg}, many types of off-the-shelf RL algorithms can be used for planning. We recommend using direct back-propagation~\citep{clark2023directly} or PPO~\citep{black2023training}.

For direct backpropagation, updating a single gradient requires $O(L)$ memory, where $L$ is the number of discretizations. To reduce memory usage, our experiments employed techniques such as (a) fine-tuning only LoRA~\citep{hu2021lora} modules instead of the full diffusion model, (b) applying gradient checkpointing~\citep{gruslys2016memory,chen2016training}, and (c) randomly truncating gradient backpropagation. A detailed discussion of these techniques is provided in Appendix~\ref{sec:planning}.

If memory constraints persist, we recommend using PPO for planning, as it requires only $O(1)$ memory per gradient update. Employing mixed precision training can also reduce memory usage.

\subsection{Choice of guidance strength $\gamma$}

First, note that $1/\gamma$ can be interpreted as the KL weight parameter in standard diffusion model fine-tuning works~\citep{fan2023dpok,uehara2024fine}, where selecting an optimal KL weight remains an open problem. As observed in these works, fine-tuning without entropy regularization often leads to over-optimization. Therefore, introducing a KL weight is beneficial as long as it is neither too small nor too large.

In this work, a larger $\gamma$ strengthens the guidance signal of the additional control (see~\pref{sec:main_alg}) but can cause the fine-tuned model to deviate more from the pre-trained model, which is also undesired. For both image experiments, we set $\gamma=10$ (see Table~\ref{tab:hyper_params}), which provides a good balance. In practice, we find that values between $5$ and $20$ are generally effective. Additionally, even if a smaller $\gamma$ is used during fine-tuning, we note that it is possible to freely adjust (strengthen or weaken) the guidance strength during inference. Details can be found in~\pref{app:inference_technique}.

\end{document}